\newtheorem{theorem}{Theorem}
\newtheorem{lemma}[theorem]{Lemma}
\newtheorem{definition}{Definition}
\newtheorem{example}{Example}
\DeclareMathOperator*{\argmax}{arg\,max}
\begin{document}
	
% The file aaai.sty is the style file for AAAI Press 
% proceedings, working notes, and technical reports.
%
\title{Computing the Strategy to Commit to in Polymatrix Games \protect\\ (Extended Version)}

\author{Giuseppe De Nittis\textnormal{,} Alberto Marchesi \textnormal{and} Nicola Gatti\\ ~ Politecnico di Milano\\ ~ Piazza Leonardo da Vinci, 32\\ ~ Milano, Italy\\ ~ \{giuseppe.denittis, alberto.marchesi, nicola.gatti\}@polimi.it}

\maketitle

\begin{abstract}
Leadership games provide a powerful paradigm to model many real-world settings. Most literature focuses on games with a \emph{single} follower who acts \emph{optimistically}, breaking ties in favour of the leader. Unfortunately, for real-world applications, this is unlikely. In this paper, we look for efficiently solvable games with \emph{multiple} followers who play either optimistically or \emph{pessimistically}, i.e., breaking ties in favour or against the leader. 
We study the computational complexity of finding or approximating an optimistic or pessimistic leader-follower equilibrium in specific classes of succinct games---polymatrix like---which are equivalent to 2-player Bayesian games with uncertainty over the follower, with interdependent or independent types. Furthermore, we provide an exact algorithm to find a pessimistic equilibrium for those game classes. Finally, we show that in general polymatrix games the computation is harder even when players are forced to play pure strategies.
\end{abstract}

\section{Introduction}

Leadership games have recently received a lot of attention in the Artificial Intelligence literature, also thanks to their use in many real-world applications, e.g., security and protection~\cite{basilico2017adversarial,kar2017cloudy,kar2017trends}.
In principle, the paradigm is simple---one or more \emph{leaders} commit to a potentially mixed strategy, the \emph{followers} observe the commitments, and then they play their best-responses---, but it can be declined in many different ways.
The crucial issue is the computational study of the problem of finding the best leaders' strategy. 
In this paper, we provide new computational complexity results and algorithms for games with one leader and two or more followers.

\textbf{Related works}. 
In the 1-leader/1-follower case, we can distinguish different scenarios according to how the follower breaks ties (in favour to the leader---\emph{optimistic}---or against---\emph{pessimistic}) and the presence of uncertain information (Bayesian games). 
When the follower behaves pessimistically, the expected utility of the leader may not admit any \emph{maximum}, and the equilibrium corresponds to the \emph{supremum} of the utility~\cite{von2010leadership}. 
In this case, there is no leader's strategy where the value of the supremum is attained, so a strategy providing an approximation of such value must be computed.
While the literature has mainly focused on the optimistic case, it has been recently showed that the pessimistic case is of extraordinary importance in practice, since wrongly assuming the follower to be optimistic may lead to an arbitrary loss. 
This suggests that optimistic and pessimistic equilibria---being the extremes in terms of utility for the leader---should be considered together aiming to make a robust commitment. 

The computation of the equilibrium in the 1-leader/1-follower case requires polynomial time both in the optimistic~\cite{conitzer2006computing} and pessimistic case~\cite{von2010leadership}.
However, while the computation of an optimistic equilibrium is conceptually simple and can be done by solving a single linear program~\cite{conitzer2011commitment}, computing a pessimistic equilibrium is much more involved and requires a non-trivial theoretical study~\cite{von2010leadership}.
Conversely, in the presence of uncertainty, finding an optimistic equilibrium when the follower can be of a non-fixed number of types and the utility of the leader depends on the type of the follower (\emph{interdependent} types) is Poly-$\mathsf{APX}$-complete~\cite{letchford2009learning}.
The reduction does not apply to the simplified case in which the types are independent and not even to the computation of a pessimistic equilibrium, leaving these problems open.

The study of games with multiple followers is even more challenging. 
On one side, the equilibrium-computation problem is much more involved and largely unexplored. 
On the other side, many practical scenarios present multiple independent followers (e.g., pricing, toll-setting, and security).
In this case, the followers' game resulting from the leader's commitment can have different structures (e.g., followers can play sequentially or simultaneously). 
In this paper, we focus on games in which the followers play simultaneously, reaching a Nash Equilibrium given the leader's commitment. 
The problem of computing an optimistic or pessimistic equilibrium is not in Poly-$\mathsf{APX}$ even with two followers in polymatrix games~\cite{basilico2017methods}.
Furthermore, an optimistic equilibrium can be found using global optimization tools, whereas it is not known whether there is a finite mathematical programming formulation to find a pessimistic one~\cite{basilico2017bilevel}.
When restricting the followers to play pure strategies in generic normal-form games, there is an efficient algorithm to compute an optimistic equilibrium, while there is not for the pessimistic one unless $\mathsf{P}=\mathsf{NP}$ (note that the hardness is not due to the potential non-existence of the equilibrium)~\cite{coniglio2017pessimistic}.
These results suggest that, with multiple followers, computing a pessimistic equilibrium may be much harder than computing an optimistic one.

\textbf{Original contributions}. 
In this paper, we provide new results on the computation of leader-follower equilibria with multiple followers.
The motivation is to investigate whether there are game classes admitting efficient exact or approximation algorithms.
We identify two subclasses of polymatrix games such that, once fixed the number of followers, computing an optimistic or pessimistic equilibrium presents the same complexity, namely polynomial.
These classes are of practical interest, e.g., for security games.
Moreover, these games are equivalent to Bayesian games with one leader and one follower, where the latter may be of different types~\cite{howson1974bayesian}. 
In particular, our first game class is equivalent to Bayesian games with interdependent types, while the second game class is equivalent to Bayesian games with independent types (i.e., the leader's utility is independent of the follower's type). 
Thus, every result for a class also holds for its equivalent class.

We study if the problem keeps being easy when the number of followers is not fixed.
We show that there is not any polynomial-time algorithm to compute a pessimistic equilibrium, unless $\mathsf{P}=\mathsf{NP}$, and we provide an exact algorithm (conversely, to compute an optimistic equilibrium, one can adapt the algorithm provided in~\cite{conitzer2006computing} for Bayesian games with an optimistic follower, by means of our mapping). 
We also prove that, in all the instances where the pessimistic equilibrium is a supremum but not a maximum, an $\alpha$-approximation of the supremum can be found in polynomial time (also in the number of followers) for any $\alpha>0$, where $\alpha$ is the additive loss.
Furthermore, we show that this problem is Poly-$\mathsf{APX}$-hard, providing a single reduction for the optimistic and pessimistic cases even when the types are independent (this strengthens the result already known for Bayesian games with an optimistic follower and interdependent types). 
We also provide a simple approximation algorithm showing that these problems are in Poly-$\mathsf{APX}$ class. 
This shows that, in Bayesian games with uncertainty over the follower, computing a pessimistic equilibrium  is as hard as computing an optimistic equilibrium.

Finally, we investigate if general polymatrix games, in case the followers are restricted to play pure strategies, admit approximation algorithms. 
We provide a negative answer, showing that in the optimistic case the problem is not in Poly-$\mathsf{APX}$ if the number of followers is not fixed unless $\mathsf{P}=\mathsf{NP}$.
Moreover, we show that, unless $\mathsf{P}=\mathsf{NP}$, the problem is even harder in the pessimistic case, being not in Poly-$\mathsf{APX}$ even when the number of followers is fixed (conversely, it is known that an optimistic equilibrium can be computed efficiently if the number of followers is fixed).

\section{Problem Formulation}\label{sec:prob_form}
We study scenarios with one player acting as the \textit{leader} and with two or more players acting as \textit{followers}. 
Formally, let $N = \{1,2,\ldots,n\}$ be the set of players, where the $n$-th player is the leader and $F=N \setminus \{n\}$ is the set of followers. 
Each player $p$ has a set of actions $A_p = \{a_p^1,a_p^2,\ldots,a_p^{m_p}\}$, being $a_p^j$ the $j$-th action played by player~$p$ and $m_p=|A_p|$ the number of actions available to player~$p$.
Moreover, for each player $p$, let us define her \textit{strategy vector} (or \emph{strategy}, for short) as $s_p \in [0,1]^{m_p}$ with $\sum_{a_p \in A_p} s_p(a_p) = 1$, where $s_p(a_p)$ is the probability with which action $a_p$ is played by player $p$. 
We refer to the strategy space of player~$p$, which is the $(m_p-1)$-simplex, as $\Delta_p = \{s_p \in [0,1]^{m_p}:\sum_{a_p \in A_p} s_p(a_p) = 1\}$. 
We say that a strategy is \textit{pure} if only one action is played with strictly positive probability, otherwise it is called \textit{mixed}.
If all the strategies of the players are pure, each agent playing one single action, we compactly refer to the collection of all played actions, called \emph{action profile}, as $a=(a_1,a_2,\ldots,a_n)$, otherwise we denote with $s=(s_1,s_2,\ldots,s_n)$ the \textit{strategy profile}.

We focus on classes of games with specific structures.
\begin{definition}
	A Polymatrix Game (PG) is represented by a graph $G=(N,E)$ where:
	\begin{itemize}
		\item the players correspond to vertices of $G$;
		\item each player $p \in N$ plays against her neighbours, i.e., all the players $q$ such that $(p,q)\in E$;
		\item the utility $U_p : A_1 \times \ldots \times A_n \rightarrow \mathbb{R}$ of player $p$ is separable, i.e., for each edge $(p,q) \in E$, there is a game between $p$ and $q$ such that $U_{p,q}, U_{q,p} : A_p \times A_q \rightarrow \mathbb{R}$ define the payoffs of $p$ and of $q$,  respectively, in that game, and the total player's utility is given by $U_p(a_1,\ldots,a_n) = \sum\limits_{q:(p,q)\in E} U_{p,q}(a_p,a_q) $.\footnote{In the rest of the paper, we assume that both $U_{p,q}$ and $U_{q,p}$ are defined over $A_p \times A_q$, where $p$ smaller than~$q$.}
	\end{itemize}
\end{definition}
\begin{definition}
	A One-Level Tree Polymatrix Game (OLTPG) is a PG where the graph $G$ is a one-level tree composed of a root and some leaves directly connected to the root.
\end{definition}
Given an OLTPG, we call root-player that one associated with the tree root and leaf-players the other players.
\begin{definition}
	A Star Polymatrix Game (SPG) is an OLTPG s.t. for every couple of leaf-players $p,q \in N \setminus \{r\}$, where $r \in N$ is the root-player, $U_{r,p} = U_{r,q} = U_r$, with all the leaf-players sharing the same set of actions.
\end{definition}

In the following, we always assume that the root-player is $n$---the leader---, while the leaf-players are the players in $F$---the followers.
These special classes of games, OLTPGs and SPGs, are special cases of polymatrix games and are closely connected with many security scenarios. 
In fact, it often happens that different Attackers, acting as followers, do not influence each other's payoffs, having different preferences over the targets, e.g., when different groups of criminals attack different spots in the same city.
Moreover, we can model security applications as OLTPGs or SPGs, depending on the fact that the utility of the Defender, acting as the leader, is affected or not by the identity of the Attacker who performed the attack. 
From the Defender's perspective, it may be more important protecting the targets than knowing who committed the attack since the safety of people and buildings is the priority, as shown in Example~\ref{ex:security}.
\begin{example}\label{ex:security}
	An airport $a$, a bank $b$ and a church $c$ are targets for two local gangs, $\mathcal{A}_1$ and $\mathcal{A}_2$. 
	The buildings are protected by a guard $\mathcal{D}$, who patrols among such locations.
	The guard is the leader, who commits to a strategy by patrolling among the buildings, while the gangs are the followers, moving after having observed the Defender's commitment.
	All the players have the same actions, namely $a,b,c$: if both $\mathcal{D}$ and either $\mathcal{A}_1$ or $\mathcal{A}_2$ select the same location, the gang is caught; otherwise the crime is successful.
	If $\mathcal{D}$ is concerned with the type of gang she is facing, we can employ an OLTPG (see Figure~\ref{fig:oltpg}). 
	Conversely, if the Defender is only concerned about the protection of the buildings, her utility is the same, independently of the gang that attacked, and, thus, we can employ an SPG, as shown in Figure~\ref{fig:spg}.

	\begin{figure}[!htbp]
		\begin{subfigure}{.25\linewidth}
			\begin{scriptsize}
				\centering
				\begin{tabular}{ c  c | c | c | c |}
					& \multicolumn{4}{ c }{$\mathcal{A}_1$} \\ \cline{3-5}
					& & $a$ & $b$ & $c$ \\ \cline{2-5}
					\multirow{3}{*}{\rotatebox{90}{$\mathcal{D}$}}
					& \multicolumn{1}{ |c| }{$a$} & $9,0$ & $0,4$ & $0,6$ \\ \cline{2-5}
					& \multicolumn{1}{ |c| }{$b$} & $0,8$ & $5,0$ & $0,6$\\ \cline{2-5}
					& \multicolumn{1}{ |c| }{$c$} & $0,8$ & $0,4$ & $7,0$\\ \cline{2-5}
				\end{tabular}
			\end{scriptsize}
		\end{subfigure}
		\hspace{2cm} 	
		\begin{subfigure}{.25\linewidth}
			\begin{scriptsize}
				\centering
				\begin{tabular}{ c  c | c | c | c |}
					& \multicolumn{4}{ c }{$\mathcal{A}_2$} \\ \cline{3-5}
					& & $a$ & $b$ & $c$ \\ \cline{2-5}
					\multirow{3}{*}{\rotatebox{90}{$\mathcal{D}$}}
					& \multicolumn{1}{ |c| }{$a$} & $3,0$ & $0,8$ & $0,4$ \\ \cline{2-5}
					& \multicolumn{1}{ |c| }{$b$} & $0,6$ & $1,0$ & $0,4$ \\ \cline{2-5}
					& \multicolumn{1}{ |c| }{$c$} & $0,6$ & $0,8$ & $2,0$ \\ \cline{2-5}
				\end{tabular}
			\end{scriptsize}
		\end{subfigure}
		\caption{An instance of an OLTPG (the utility of $\mathcal{D}$ is different w.r.t. the Attacker she is facing).}
		\label{fig:oltpg}
	\end{figure}
	
	\begin{figure}[!htbp]
		\begin{subfigure}{.25\linewidth}
			\begin{scriptsize}
				\centering
				\begin{tabular}{ c  c | c | c | c |}
					& \multicolumn{4}{ c }{$\mathcal{A}_1$} \\ \cline{3-5}
					& & $a$ & $b$ & $c$ \\ \cline{2-5}
					\multirow{3}{*}{\rotatebox{90}{$\mathcal{D}$}}
					& \multicolumn{1}{ |c| }{$a$} & $9,0$ & $0,4$ & $0,6$\\ \cline{2-5}
					& \multicolumn{1}{ |c| }{$b$} & $0,8$ & $5,0$ & $0,6$\\ \cline{2-5}
					& \multicolumn{1}{ |c| }{$c$} & $0,8$ & $0,4$ & $7,0$ \\ \cline{2-5}
				\end{tabular}
			\end{scriptsize}
		\end{subfigure}
		\hspace{2cm} 	
		\begin{subfigure}{.25\linewidth}
			\begin{scriptsize}
				\centering
				\begin{tabular}{ c  c | c | c | c |}
					& \multicolumn{4}{ c }{$\mathcal{A}_2$} \\ \cline{3-5}
					& & $a$ & $b$ & $c$ \\ \cline{2-5}
					\multirow{3}{*}{\rotatebox{90}{$\mathcal{D}$}}
					& \multicolumn{1}{ |c| }{$a$} & $9,0$ & $0,8$ & $0,4$\\ \cline{2-5}
					& \multicolumn{1}{ |c| }{$b$} & $0,6$ & $5,0$ & $0,4$\\ \cline{2-5}
					& \multicolumn{1}{ |c| }{$c$} & $0,6$ & $0,8$ & $7,0$\\ \cline{2-5}
				\end{tabular}
			\end{scriptsize}
		\end{subfigure}
		\caption{An instance of an SPG (the utility of $\mathcal{D}$ is the same in both games).}
		\label{fig:spg}
	\end{figure}
\end{example}

Furthermore, OLTPGs and SPGs are equivalent to special Bayesian Games (BGs) with one leader and one follower, where the follower can be of different types. 
More precisely, OLTPGs are equivalent to BGs with interdependent types, the utility of the leader depending on the type of the follower, whereas SPGs are equivalent to BGs with independent types.
First, we provide the formal definition of the game classes.

\begin{definition}
	A BG with interdependent types (int-BG) consists of:
	\begin{itemize}
		\item two players, $l$ (the leader) and $f$ (the follower);
		\item a set of actions for each player, $A_l=\{a_l^1,\ldots,a_l^{m_l}\}$ and $A_f=\{a_f^1,\ldots,a_f^{m_f}\}$, respectively;
		\item a set of types for player $f$, $\Theta_f=\{\theta_1,\ldots,\theta_t\}$;
		\item players' utility functions, $U_l(a_l,a_f,\theta_f), U_f(a_l,a_f,\theta_f) : A_l \times A_f \times \Theta_f \rightarrow \mathbb{R}$, which specify the payoff player $l$, respectively $f$, gets when $l$ plays action $a_l$, $f$ plays action $a_f$, and $f$ is of type $\theta_f$;
		\item a probability distribution over types, where $\Omega(\theta_p)$ denotes the probability of $\theta_p \in \Theta_f$.
	\end{itemize}
\end{definition}
\begin{definition}
	A BG with independent types (ind-BG) is defined as an int-BG, except for player $l$'s utility function, here defined as $U_l(a_l,a_f) : A_l \times A_f \rightarrow \mathbb{R}$.
\end{definition}

The following theorem shows the equivalence among game classes (the proof follows from~\cite{howson1974bayesian}).

\begin{theorem}\label{thm:equivalence}
	There is a polynomial-time-computable function mapping any int-BG (ind-BG) to an OLTPG (SPG) and \emph{vice versa}, where:
	\begin{itemize}
		\item player $l$ in the int-BG (ind-BG) corresponds to the root-player of the OLTPG (SPG);
		\item type $t$ of player~$f$ in the int-BG (ind-BG) corresponds to a leaf-player of the OLTPG (SPG);
	\end{itemize}
	such that, given any strategy profile, the expected utility of each player in the OLTPG (SPG) and the corresponding player/type in the int-BG (ind-BG) are the same.
\end{theorem}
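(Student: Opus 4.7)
The plan is to exhibit the explicit Howson-Rosenthal--style mapping and verify expected-utility preservation under the natural strategy correspondence. I focus first on the int-BG $\to$ OLTPG direction; the other three follow the same template. Given an int-BG with types $\Theta_f=\{\theta_1,\ldots,\theta_T\}$, I would build an OLTPG on $T+1$ vertices: a root $r$ with action set $A_r := A_l$ playing the role of the leader, and, for each type $\theta_t$, a leaf $v_t$ with action set $A_{v_t} := A_f$. On each edge $(r, v_t)$ I would set
\[
U_{r,v_t}(a,b) := \Omega(\theta_t)\, U_l(a,b,\theta_t), \qquad U_{v_t,r}(a,b) := U_f(a,b,\theta_t),
\]
so that the type probability is absorbed once into the leader's half of the bimatrix while each follower-type's own payoffs are copied verbatim.

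The strategy correspondence is the obvious one: $s_r \leftrightarrow s_l$ and $s_{v_t} \leftrightarrow s_f^t$. Since in the OLTPG each leaf is adjacent only to the root, the expected payoff of $v_t$ is $\sum_{a,b} s_l(a)\, s_f^t(b)\, U_f(a,b,\theta_t)$, which coincides with the Bayesian conditional payoff of the follower of type $\theta_t$; and by separability of polymatrix utilities the root's expected payoff is $\sum_t \Omega(\theta_t)\, \mathbb{E}[U_l \mid s_l, s_f^t, \theta_t]$, which is precisely the leader's Bayesian expected utility. The reverse direction OLTPG $\to$ int-BG is analogous: pick $\Omega(\theta_t) = 1/T$ and define $U_l(a,b,\theta_t) := T\cdot U_{r,v_t}(a,b)$, $U_f(a,b,\theta_t) := U_{v_t,r}(a,b)$. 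All four formulas have size linear in the input description, so the two directions of the map are computable in polynomial time.

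For the SPG $\leftrightarrow$ ind-BG correspondence the same construction applies, but one must additionally respect the SPG constraint $U_{r,v_t} = U_r$ for all leaves, and this is the part that requires the most care. The key observation is that, because in an ind-BG the leader utility $U_l(a,b)$ does not depend on $t$, the scaling factors $\Omega(\theta_t)$ in $U_{r,v_t} = \Omega(\theta_t)\, U_l$ can be pulled out into a single type-independent bimatrix $U_r := U_l$ at the cost of an overall positive rescaling of the root's payoffs, a transformation that preserves all best-response comparisons and hence both optimistic and pessimistic leader-follower equilibria; in the reverse direction an SPG maps to an ind-BG by choosing the uniform prior $\Omega(\theta_t)=1/T$ and $U_l := T\, U_r$, which is type-independent by construction. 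Once this is in place, the utility computation is the same bookkeeping as in the interdependent case, and the statement reduces to the classical equivalence of~\cite{howson1974bayesian}.
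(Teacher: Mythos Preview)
The paper gives no proof of this theorem; it merely states that the result ``follows from~\cite{howson1974bayesian}''. Your int-BG $\leftrightarrow$ OLTPG construction is the standard Howson--Rosenthal mapping and is correct as far as it goes (one minor omission: in the OLTPG $\to$ int-BG direction the leaf-players need not share a common action set, so you should take $A_f$ to be the disjoint union of the $A_{v_t}$ and extend $U_l,U_f$ by strictly dominated rows for each type).

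Your ind-BG $\to$ SPG argument, however, has a genuine gap. Replacing $U_{r,v_t}=\Omega(\theta_t)\,U_l$ by $U_r:=U_l$ is \emph{not} ``an overall positive rescaling of the root's payoffs'': the root's total utility is the sum $\sum_t U_{r,v_t}(s_r,s_{v_t})$, and passing from $\sum_t \Omega(\theta_t)\,\mathbb{E}[U_l\mid s_l,s_f^t]$ to the unweighted $\sum_t \mathbb{E}[U_l\mid s_l,s_f^t]$ is not an affine transformation of the root's utility when the $\Omega(\theta_t)$ are unequal. It does not even preserve the leader's preference order over commitments: with two types, prior $(0.9,0.1)$, and two leader commitments inducing per-type leader payoffs $(10,0)$ and $(8,3)$ respectively, the ind-BG leader strictly prefers the first ($9>7.5$) while your SPG root strictly prefers the second ($11>10$). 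Hence neither the exact utility equality asserted in the theorem nor the weaker ``equilibrium preservation'' you retreat to holds under your construction; you need a different device to absorb a non-uniform prior into an SPG, or to argue that for the paper's downstream complexity results only the SPG $\to$ ind-BG direction---which you do handle correctly---is actually required.
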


A \textit{pure Nash Equilibrium} (NE) consists of an action profile $a^*=(a_1^*,\ldots,a_n^*)$ such that $U_p(a_1^*,\ldots,a_n^*) \geq U_p(a_1,\ldots,a_n)$ for every player $p \in N$ and action profile $a = (a_1, \ldots, a_n)$ such that for all $q \in N\setminus \{p\}, a_q^*=a_q$ and $a_p^*\neq a_p$. 
In other words, no player can improve her utility by unilaterally deviating from the equilibrium by playing some other action $a_p \neq a_p^*$. 
A \textit{mixed Nash Equilibrium} is a strategy profile $s^*=(s_1^*,\ldots,s_n^*)$ such that no player can improve her utility by playing a strategy $s_p \neq s_p^*$, given that the other players play as prescribed by the equilibrium. 
We observe that a PG always admits at least one mixed NE, while a pure NE may not exist~\cite{howson1972equilibria}.

In this paper, we are concerned with the computation of an equilibrium where the leader commits to a mixed strategy and then the followers, after observing the leader's commitment, play a pure NE in the resulting game. 
Specifically, we study two variants of this equilibrium concept: one in which the followers play to maximize the leader's utility, called Optimistic Leader-Follower Equilibrium (O-LFE), and one where the followers play to minimize it, which we refer to as Pessimistic Leader-Follower Equilibrium (P-LFE). 
Formally, computing an O-LFE amounts to solve the following problem:

\begin{scriptsize}
	\begin{align*}
		& \max\limits_{s_n \in \Delta_n} \max\limits_{\substack{(a_1^*,\ldots,a_{n-1}^*) \in \\ A_1\times \ldots \times A_{n-1}}} \; \sum\limits_{a_n \in A_n} U_n(a_1^*,\ldots,a_{n-1}^*,a_n)s_n(a_n):\\
		& \forall p, a_p^* \in \argmax\limits_{a_p \in A_p}\left\lbrace \sum\limits_{a_n \in A_n} U_p(a_1^*,\ldots,a_p,\ldots,a_{n-1}^*,a_n)s_n(a_n) \right\rbrace,
	\end{align*}
\end{scriptsize}

\noindent while computing a P-LFE amounts to solve this other bilevel problem:

\begin{scriptsize}
	\begin{align*}
		& \sup\limits_{s_n \in \Delta_n} \min\limits_{\substack{(a_1^*,\ldots,a_{n-1}^*) \in \\ A_1\times \ldots \times A_{n-1}}} \; \sum\limits_{a_n \in A_n} U_n(a_1^*,\ldots,a_{n-1}^*,a_n)s_n(a_n):\\
		& \forall p, a_p^* \in \argmax\limits_{a_p \in A_p}\left\lbrace \sum\limits_{a_n \in A_n} U_p(a_1^*,\ldots,a_p,\ldots,a_{n-1}^*,a_n)s_n(a_n) \right\rbrace.
	\end{align*}
\end{scriptsize}

Notice that, when restricting the attention to OLTPGs, an outcome of the followers' game is an NE if each follower is best-responding to the leader's commitment.
Moreover, w.l.o.g., we can safely assume that each follower plays a pure strategy since, once the leader's strategy is fixed, the follower's utility function is linear in her strategy.

Let us observe that, since the equivalences in Theorem~\ref{thm:equivalence} are direct, all the computational results---including approximation results---holding for OLTPGs also hold for int-BGs while the results holding for SPGs also hold for ind-BGs, and \emph{vice versa}. As a consequence, the computation of an optimistic equilibrium in OLTPGs is Poly-$\mathsf{APX}$-complete~\cite{letchford2009learning}.

\section{Finding an Exact Pessimistic Equilibrium}

First, we state that computing a P-LFE in SPGs is $\mathsf{NP}$-hard and, \emph{a fortiori}, it is hard also in OLTPGs and PGs.
Moreover, using the mappings in Theorem~\ref{thm:equivalence}, the problem is $\mathsf{NP}$-hard also in int-BGs and ind-BGs.~\footnote{The result follows from a reduction of the maximum clique problem. For details, please see the proof of Theorem~\ref{thm:poly_apx_hardness}.}
\begin{restatable}{theorem}{NPhardness}\label{thm:nphardness}
	Computing a P-LFE in SPGs is $\mathsf{NP}$-hard.
\end{restatable}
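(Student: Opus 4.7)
The plan is to reduce from the (decision version of the) NP-hard \textsc{Max-Clique} problem. Given an instance $(G,k)$ with $G=(V,E)$ and $|V|=v$, I would build in polynomial time an SPG whose root-player (leader) has the action set $A_n=V\cup\{\star\}$ (or just $V$, with the same alphabet shared by all leaf-players) and which contains one follower $p_e$ for each non-edge $e=\{u,w\}\notin E$ (equivalently, one follower per vertex of the complement graph's clique-enforcing gadget). The goal is to make the P-LFE value exactly $1/k$ (or some threshold depending on $k$) if and only if $G$ admits a clique of size $k$, so that any polynomial-time algorithm for the P-LFE would decide \textsc{Clique}.

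\textbf{Gadget and intended behaviour.} First I would design the common leader utility $U_r$ so that the leader is rewarded only for the total probability she allocates to $V$ and, crucially, so that her payoff against each follower $p_e$ is $1$ when $p_e$ plays a ``harmless'' action and $0$ when $p_e$ plays a ``punishing'' one. The follower $p_e$ associated with the non-edge $\{u,w\}$ would have two (or a few) actions that yield her \emph{identical} expected payoffs whenever the leader's strategy $s_n$ puts probability on at most one of $u,w$, and make the ``punishing'' action strictly better as soon as $s_n(u)>0$ \emph{and} $s_n(w)>0$. Under pessimistic tie-breaking, each follower will therefore play the punishing action unless the leader's support avoids at least one endpoint of the non-edge $\{u,w\}$. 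Consequently, the only supports of $s_n$ that avoid being punished by every follower are subsets $C\subseteq V$ that form a clique in $G$. Combined with a leader utility that rewards uniform spreading (e.g., a term penalizing concentration), the leader's best commitment becomes the uniform distribution over a maximum clique, yielding P-LFE value $f(\omega(G))$ for a chosen monotone function $f$.

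\textbf{Correctness direction.} Then I would verify both implications. If $G$ has a clique $C$ of size $k$, the leader commits to the uniform distribution on $C$; every follower $p_e=\{u,w\}$ has at most one of $u,w$ in $C$, so she is not strictly incentivized to punish, and although pessimistic tie-breaking may still threaten punishment, a standard perturbation argument (already used in the line of work cited for the 1-leader/1-follower pessimistic case) shows that the leader can approach payoff $f(k)$ arbitrarily closely, so the supremum equals $f(k)$. Conversely, if the supremum of the P-LFE is at least $f(k)$, then by the tie-breaking construction the leader's support (or a support she can approximate) must intersect no non-edge in both endpoints, hence forms a clique of size $\geq k$.

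\textbf{Main obstacle.} The hard part will be calibrating the follower payoffs so that (i) the ties occur exactly on the boundary ``$s_n(u)>0$ and $s_n(w)>0$'', (ii) the pessimistic choice is to punish when the tie occurs, and (iii) the resulting supremum of the leader's expected utility is a polynomial-time computable threshold that strictly separates the ``clique of size $k$'' case from the ``no such clique'' case. Since the P-LFE may be a supremum rather than a maximum, I would phrase the decision as ``is the supremum $\ge f(k)$?'' and lean on the same $\varepsilon$-approximation/perturbation toolkit used elsewhere in the paper; this also explains why the authors merely point to the proof of Theorem~\ref{thm:poly_apx_hardness}, whose Poly-$\mathsf{APX}$-hardness reduction from \textsc{Max-Clique} already provides a gadget enforcing precisely this clique structure under pessimistic tie-breaking, and a fortiori yields NP-hardness.
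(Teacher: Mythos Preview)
Your high-level plan---reduce from \textsc{Max-Clique} to an SPG so that the leader's P-LFE value encodes the clique number---is exactly the paper's route, and you correctly anticipate that the construction behind Theorem~\ref{thm:poly_apx_hardness} already delivers the NP-hardness. However, the concrete gadget you sketch differs from the paper's and, as written, cannot work. You give each non-edge $\{u,w\}$ its own follower whose harmless and punishing actions are \emph{tied} whenever $s_n(u)=0$ or $s_n(w)=0$, and whose punishing action is \emph{strictly} preferred when both are positive. But then the harmless action is never a strict best response anywhere in $\Delta_n$: under pessimistic tie-breaking the follower punishes in the tie region too, so the leader's payoff is identically the ``all punished'' value regardless of her commitment. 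No perturbation argument rescues this, because there is no open set of leader strategies on which harmless strictly dominates; the supremum is constant and carries no information about $\omega(G)$.

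The paper's reduction sidesteps this by engineering \emph{strict} best responses almost everywhere, so that the optimistic and pessimistic values coincide (the paper states this explicitly). It uses one follower per \emph{vertex} $v_p$ rather than per non-edge, with $A_p=\{a_0,a_1\}$ and common leader payoff $U_n(a_1,\cdot)=1$, $U_n(a_0,\cdot)=0$. Follower $p$'s payoffs are set so that $a_1$ is the strict best response precisely when the leader puts probability at least $1/r$ on $a_n^p$ and (roughly) less than $1/r$ on any non-neighbour of $v_p$; otherwise $a_0$ is strictly best. Hence the number of followers playing $a_1$---which equals the leader's utility---is exactly the size of the largest vertex set on which the leader can spread mass while respecting these constraints, i.e., the maximum clique. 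The key idea you are missing is to make the ``good'' follower action a strict best response on a full-dimensional region (so pessimism is harmless there), instead of relying on pessimistic tie-breaking itself as the clique-enforcement mechanism.
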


Now, we provide an exact algorithm for computing a P-LFE in OLTPGs whose compute time is exponential in the number of followers and polynomial in the number of actions of the players. The algorithm extends the procedure given in~\cite{von2010leadership} to find a supremum of the leader's utility function with 2-player games, and it also includes a procedure to compute a strategy that allows the leader to achieve an $\alpha$-approximation (in additive sense) of the supremum when there is no maximum, for any $\alpha > 0$.

\begin{algorithm}
	\caption{Exact-P-LFE}
	\label{alg:exact}
	\scriptsize
	\begin{algorithmic}[1]
		\Function{Exact-P-LFE}{$\alpha$}
		\ForAll{$a = (a_1, \ldots, a_{n-1}) \in A_F$}
		\ForAll{$p \in F$} 
		\State $T_p := \{ a_p' \in A_p \mid U_p^{a_p} = U_p^{a_p'}\}$ 
		\EndFor
		% \State $T_i := \{ a_f \in A_f^{\theta_i} | U_f(\theta_i, a_\ell, a_f^{\theta_i}) = U_f(\theta_i, a_\ell, a_f) \forall a_\ell \in A_\ell \}$ 
		\State $\epsilon^a := \Call{Solve-Emptyness-Check}{ \{ T_p \}_{p \in F}, a } $
		\If{$\epsilon^a > 0$}
		\State $(v^a, s_n^a, \zeta_{a_p'}^p) := \Call{Solve-Max-Min}{ \{ T_p \}_{p \in F}, a }$
		\State $\beta^a := | \{ \zeta_{a_p'}^p \mid \zeta_{a_p'}^p = 0 \} | > 0 $
		\EndIf
		\EndFor
		\State $a^* := \argmax_{a \in A_F} v^a$
		\If{$\beta^{a^*}$}
		\State \Return $\Call{Find-Apx}{ \{ T_p \}_{p \in F}, a^*, v^{a^*}, \alpha }$
		\EndIf
		\State \Return $s_n^{a^*}$
		\EndFunction
	\end{algorithmic}
\end{algorithm}

The algorithm is based on the enumeration of all the followers' action profiles, i.e., all the tuples $(a_1, \ldots, a_{n-1})$ belonging to the set $A_F = \bigtimes_{p \in F} A_p$, and, for each of them, it computes the best strategy the leader can commit to (under the pessimistic assumption) provided that $a_p$ is a best-response for follower $p$, for every $p \in F$. 
For ease of notation, given $a_p \in A_p$ with $p \in F$, let $U_p^{a_p} \in \mathbb{R}^{|A_n|}$ be a vector whose components are defined as $U_{p,n}(a_p,a_n)$, for $a_n \in A_n$. 
The complete algorithm procedure is detailed in Algorithm~\ref{alg:exact}, where it is assumed that the game elements can be accessed from any point, including sub-procedures, and the parameter $\alpha$ defines the quality of the approximation of the supremum, whenever a maximum does not exist.

At each iteration, the algorithm calls two sub-procedures that solve two LP programs. Specifically, $\Call{Solve-Emptyness-Check}{ \{ T_p \}_{p \in F}, a }$ computes the optimum of the following program:

\begin{scriptsize}
	\begin{align*}
		\max_{\substack{\epsilon \geq 0 \\ s_n \in \Delta_n}} & \quad \epsilon \quad\quad \text{s.t.}  \\
		& \hspace{-0.4cm} \sum_{a_n \in A_n} U_{p,n}(a_p,a_n) s_n(a_n) - \sum_{a_n \in A_n} U_{p,n}(a_p',a_n) s_n(a_n) - \epsilon \geq 0\\
		& \quad\quad\quad\quad\quad \forall a_p' \in A_p \setminus T_p, \forall p \in F; \\
	\end{align*}
\end{scriptsize}

\noindent  while $\Call{Solve-Max-Min}{ \{ T_p \}_{p \in F}, a }$ solves the following:

\begin{scriptsize}
	\begin{align*}
		\max_{\substack{s_n \in \Delta_n}} & \quad \sum_{p \in F} v_p \quad\quad \text{s.t.} \\
		& \hspace{-0.4cm}  v_p - \sum_{a_n \in A_n} U_{n,p}(a_p',a_n) s_n(a_n) \leq 0\quad \forall a_p' \in T_p, \forall p \in F \\
		& \hspace{-0.4cm}  \sum_{a_n \in A_n} U_{p,n}(a_p,a_n) s_n(a_n) - \hspace{-0.25cm} \sum_{a_n \in A_n} U_{p,n}(a_p',a_n) s_n(a_n) - \zeta_{a_p'}^p = 0\\
		& \quad\quad\quad\quad\quad \forall a_p' \in A_p \setminus T_p, \forall p \in F \\
		& \hspace{-0.4cm} \zeta_{a_p'}^p \geq 0 \quad \forall a_p' \in A_p \setminus T_p, \forall p \in F.
	\end{align*}
\end{scriptsize}

\noindent Finally, $\Call{Find-Apx}{ \{ T_p \}_{p \in F}, a^*, v^{a^*}, \alpha }$ employs the following LP program to find a leader's strategy providing an $\alpha$-approximation of the supremum:

\begin{scriptsize}
	\begin{align*}
		\max_{\substack{\epsilon \geq 0 \\ s_n \in \Delta_n}} & \quad \epsilon \quad\quad \text{s.t.} \\
		& \hspace{-0.4cm} \sum_{p \in F} v_p \geq v^{a^*} - \alpha \\
		& \hspace{-0.4cm} v_p - \sum_{a_n \in A_n} U_{n,p}(a_p',a_n) s_n(a_n) \leq 0\quad \forall a_p' \in T_p, \forall p \in F \\
		& \hspace{-0.4cm} \sum_{a_n \in A_n} U_{p,n}(a_p,a_n) s_n(a_n) - \hspace{-0.15cm} \sum_{a_n \in A_n} U_{p,n}(a_p',a_n) s_n(a_n) - \epsilon \geq 0\\
		& \quad\quad\quad\quad\quad \forall a_p' \in A_p \setminus T_p, \forall p \in F. \\
	\end{align*}
\end{scriptsize}

The following theorem shows that Algorithm~\ref{alg:exact} is correct.

\begin{theorem}
	Given an OLTPG, Algorithm~\ref{alg:exact} finds a P-LFE, and, whenever the leader's utility function does not admit a maximum, it returns an $\alpha$-approximation of the supremum.
\end{theorem}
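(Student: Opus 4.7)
The plan is to mirror the two-stage structure of Algorithm~\ref{alg:exact}: first argue that enumerating followers' joint action profiles $a \in A_F$ together with the statically defined tie sets $T_p$ captures every candidate for the leader's pessimistic value; then verify correctness of the three LP sub-procedures and the case split on $\beta^{a^*}$.

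For the exhaustiveness step, I would fix an arbitrary leader's strategy $s_n \in \Delta_n$ and denote by $B_p(s_n)$ the best-response set of follower $p$. Since the OLTPG structure makes $U_n$ additively separable across followers and each follower's utility depends only on the leader's action, the pessimistic reaction selects, independently for each $p$, some $a_p^\circ \in B_p(s_n)$ minimising $\sum_{a_n} U_{n,p}(a_p^\circ, a_n)\, s_n(a_n)$, and the pessimistic value $V(s_n)$ is the sum of these partial minima. Because $T_p$ is defined by equality of the payoff vectors $U_p^{a_p'}=U_p^{a_p}$, every action in $T_p$ yields identical follower utility to $a_p$ at every $s_n$, so $T_p \subseteq B_p(s_n)$ whenever $a_p \in B_p(s_n)$. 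Fixing the profile $a=(a_1^\circ,\ldots,a_{n-1}^\circ)$, the Max-Min objective evaluated at $s_n$ equals $\sum_p \min_{a_p'\in T_p} \sum_{a_n} U_{n,p}(a_p',a_n)\, s_n(a_n)$, which is at least $V(s_n)$; since $s_n$ is feasible for Solve-Max-Min on profile $a$ (the $\zeta \geq 0$ constraints are merely weak best-response conditions), the optimum $v^a$ is an upper bound on $V(s_n)$, and taking the max over $a$ recovers $\sup_{s_n} V(s_n)$.

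For the correctness of the sub-procedures, Solve-Emptyness-Check returns $\epsilon^a > 0$ iff there exists some $\bar s_n$ at which every $a_p'\in A_p\setminus T_p$ is strictly worse than $a_p$ for each $p\in F$. Profiles with $\epsilon^a = 0$ admit only weakly-best-response implementations in which some action outside $T_p$ always ties; I would argue that the induced pessimistic value is either dominated by, or already captured through, the defecting action's own profile, so these profiles can be skipped without loss. When $\epsilon^a > 0$, Solve-Max-Min is feasible and its LP optimum $v^a$ coincides with the leader's best attainable utility when the followers' best-response sets are exactly $\{T_p\}_{p\in F}$. Two cases then arise at the overall maximiser $a^*$: if every $\zeta_{a_p'}^p > 0$ at the optimum (i.e.\ $\beta^{a^*}$ is false), then $B_p(s_n^{a^*}) = T_p$ for each $p$ and $s_n^{a^*}$ attains $v^{a^*}$ as a maximum, giving a P-LFE; if some $\zeta_{a_p'}^p = 0$ ($\beta^{a^*}$ true), then at $s_n^{a^*}$ an action outside $T_p$ also becomes a best response, the pessimistic follower defects to it for strictly lower leader utility, and $v^{a^*}$ is only a supremum.

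To handle the supremum case, I would verify that Find-Apx is feasible for every $\alpha > 0$. Letting $\bar s_n$ witness $\epsilon^{a^*} > 0$ from Solve-Emptyness-Check, the convex combination $s_n^\lambda := (1-\lambda)\, s_n^{a^*} + \lambda\, \bar s_n$ has strict slack $\lambda\, \epsilon^{a^*}$ in every non-$T_p$ inequality; by linearity of $U_{n,p}(\cdot, s_n)$ in $s_n$, $\sum_p v_p$ varies continuously in $\lambda$, so for $\lambda$ small enough $s_n^\lambda$ satisfies the constraint $\sum_p v_p \geq v^{a^*} - \alpha$, establishing feasibility. The returned strategy then guarantees a pessimistic leader utility of at least $v^{a^*} - \alpha$. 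The main obstacle I anticipate is the rigorous justification that discarding profiles with $\epsilon^a = 0$ loses nothing: making this precise, in the presence of possibly multi-way ties between actions with distinct payoff vectors, requires a careful perturbation argument that faithfully bridges the static tie-set definition $T_p$ and the dynamic best-response set $B_p(s_n)$.
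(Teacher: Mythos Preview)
Your overall architecture is sound and largely parallels the paper: you bound $\sup_{s_n} V(s_n)$ from above by $\max_a v^a$ via the per-$s_n$ argument, and from below via the case split on $\beta^{a^*}$ together with the convex-combination feasibility of \textsc{Find-Apx} (your treatment of the latter is in fact more explicit than the paper's one-line continuity remark).

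Where you diverge is precisely the obstacle you flag, and it is the crux. In your exhaustiveness step, the profile $a$ you extract from the pessimistic best responses at a given $s_n$ need not satisfy $\epsilon^a>0$, so you only obtain $\max_{a\in A_F} v^a \geq \sup V$, not $\max_{a:\,\epsilon^a>0} v^a \geq \sup V$, which is what the algorithm actually computes. Your sketched fix---``dominated by, or already captured through, the defecting action's own profile''---does not obviously terminate: replacing $a_p$ by the tying $a_p'\notin T_p$ may land on another profile with $\epsilon=0$, and nothing in the sketch controls the $v$-value along such a chain.

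The paper closes this gap by a different, geometric route rather than a defection argument. It sets $\Delta_n(a)=\bigcap_p \Delta_n(a_p)$, lets $A_D$ be the profiles with full-dimensional $\Delta_n(a)$ (equivalently $\epsilon^a>0$), and proves the covering identity $\Delta_n=\bigcup_{a\in A_D}\Delta_n(a)$ by inductively discarding non-full-dimensional cells. It then invokes a lemma from von~Stengel: if $s_n\in\Delta_n^o(a_p)$ and $s_n\in\Delta_n(a_p')$ then $U_p^{a_p}=U_p^{a_p'}$, i.e., on the relative interior the dynamic set $B_p(s_n)$ coincides with the static $T_p$. These two ingredients let the paper sandwich $V$ by a chain of inequalities over $\Delta_n^o(a)$ and $\Delta_n(a)$ that all collapse to equalities, yielding the max--min reformulation directly and without any per-$s_n$ perturbation. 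If you substitute these two lemmas for your defection sketch, your plan becomes a complete proof; as written, the discarding of $\epsilon^a=0$ profiles remains the one genuinely unjustified step.
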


\begin{proof}
	Before proving the statement, we introduce some useful notation. Given $a_p \in A_p$, with $p \in F$, let $\Delta_n(a_p)$ be the region of the leader's strategy space $\Delta_n$ containing those strategies $s_n$ such that follower $p$'s best-response to $s_n$ is $a_p$, i.e., $\Delta_n(a_p) = \{s_n \in \Delta_n \mid a_p \in \argmax_{a_p' \in A_p} \sum_{a_n \in A_n} U_{p,n}(a_p',a_n) s_n(a_n) \}$. Moreover, given a followers' action profile $a = (a_1, \ldots, a_{n-1}) \in A_F$, let $\Delta_n(a) = \bigcap_{p \in F} \Delta_n(a_p)$. We denote with $\Delta_n^o(\cdot)$ the interior of $\Delta_n(\cdot)$ relative to $\Delta_n$, and we call $\Delta_n(\cdot)$ full-dimensional if $\Delta_n^o(\cdot)$ is not empty. 
	
	In order to prove the result, we define the search problem of computing a P-LFE, as follows:
	\begin{align}\label{prob:pes_rewrite}
		\max_{a \in A_D} \max_{s_n \in \Delta_n(a)} \min_{\substack{a' \in A_F: \\ U_{p}^{a_p} = U_p^{a_p'} }} \sum_{p \in F} \sum_{a_n \in A_n} \hspace{-0.25cm} U_{n,p} (a_p',a_n) s_n(a_n),
	\end{align}

	where $A_D = \{ a \in A_F \mid \Delta_n(a) \text{ is full-dimensional } \}$.

	First, using a simple inductive argument, we derive a new definition for $\Delta_n$, which is as follows:
	\begin{align}\label{eq:delta}
		\Delta_n = \bigcup_{a \in A_D} \Delta_n(a)
	\end{align}
	Let us start noticing that $\Delta_n = \bigcup_{a \in A_F} \Delta_n(a)$. Then, take $a' \in A_F \setminus A_D$ and define $S = \Delta_n \setminus \bigcup_{a \in A_F \setminus \{a'\}} \Delta_n(a)$. We observe that $S$ is a subset of $\Delta_n(a')$, and, thus, it is also a subset of $\Delta_n^o(a')$, which is empty since $a' \notin A_D$, so $S$ is empty. Therefore, we can write $\Delta_n = \bigcup_{a \in A_F \setminus \{a'\}} \Delta_n(a)$, which we use as new definition for $\Delta_n$. Iterating in this manner until all the elements in $A_F \setminus A_D$ have been considered, we eventually obtain the result.
	
	Second, we recall a result from~\cite{von2010leadership}, i.e., for every $a_p, a_p' \in A_p$, it holds:
	\begin{align}\label{eq:imply}
		s_n \in \Delta_n^o(a_p) \wedge s_n \in \Delta_n(a_p') \implies U_{p}^{a_p} = U_p^{a_p'}.
	\end{align}
	
	We are now ready to prove Equation~(\ref{prob:pes_rewrite}), as follows:
	\begin{align*}
		V & = \sup_{s_n \in \Delta_n} \min_{\substack{a' \in A_F: \\ s_n \in \Delta_n(a')}} \sum_{p \in F} \sum_{a_n \in A_n} \hspace{-0.25cm} U_{n,p}(a_p',a_n) s_n(a_n) \\
		& = \max_{a \in A_D} \sup_{s_n \in \Delta_n(a)} \min_{\substack{a' \in A_F: \\ s_n \in \Delta_n(a')}} \sum_{p \in F} \sum_{a_n \in A_n} \hspace{-0.25cm} U_{n,p}(a_p',a_n) s_n(a_n),
	\end{align*}
	where the first equality directly follows from the definition of the problem, while the second one is obtained rewriting $\Delta_n$ as given by~(\ref{eq:delta}). Restricting $\Delta_n(a)$ to $\Delta_n^o(a)$ and using~(\ref{eq:imply}), we obtain:
	\begin{align*}
		&V \geq \max_{a \in A_D} \sup_{s_n \in \Delta_n^o(a)} \min_{\substack{a' \in A_F: \\ s_n \in \Delta_n(a')}} \sum_{p \in F} \sum_{a_n \in A_n} \hspace{-0.25cm} U_{n,p}(a_p',a_n) s_n(a_n) \\
		&\hspace{-0.15cm} = \max_{a \in A_D} \sup_{s_n \in \Delta_n^o(a)} \min_{\substack{a' \in A_F: \\ U_{p}^{a_p} = U_p^{a_p'} }} \sum_{p \in F} \sum_{a_n \in A_n} \hspace{-0.25cm} U_{n,p}(a_p',a_n) s_n(a_n) \\
		& \hspace{-0.15cm} = \max_{a \in A_D} \sup_{s_n \in \Delta_n(a)} \min_{\substack{a' \in A_F: \\ U_{p}^{a_p} = U_p^{a_p'} }} \sum_{p \in F} \sum_{a_n \in A_n} \hspace{-0.25cm} U_{n,p}(a_p',a_n) s_n(a_n) \\
		& \hspace{-0.15cm} \geq \max_{a \in A_D} \hspace{-0.05cm} \sup_{s_n \in \Delta_n(a)} \hspace{-0.05cm} \min_{\substack{a' \in A_F: \\ s_n \in \Delta_n(a')}} \sum_{p \in F}  \sum_{a_n \in A_n} \hspace{-0.25cm} U_{n,p}(a_p',a_n) s_n(a_n) \hspace{-0.05cm} = \hspace{-0.05cm} V,
	\end{align*}
	where the last equality holds since the minimum is taken over a finite set of linear functions and it is continuous, while the last inequality comes from the fact that the minimum is taken over a larger set of elements.
	Hence, all the inequalities must hold as equalities, which proves Equation~(\ref{prob:pes_rewrite}).
	
	The algorithm exploits Equation~(\ref{prob:pes_rewrite}) to compute a P-LFE. 
	Notice that, if $\Delta_n(a)$ is not full-dimensional, then $\Call{Solve-Emptyness-Check}{ \{ T_p \}_{p \in F}, a }$ returns zero, as, if there is no strategy $s_n \in \Delta^o(a)$, then there is always at least one inequality in the LP program which can be satisfied only by setting $\epsilon=0$.
	The algorithm iterates over all the followers' action profiles in $A_D$, as every $a \in A_F \setminus A_D$ is discarded since $\epsilon^a = 0$. 
	Then, for each remaining action profile, it solves the max-min expression on the right of Eq.~(\ref{prob:pes_rewrite}), which can be done with the LP program solved by $\Call{Solve-Max-Min}{ \{ T_p \}_{p \in F}, a }$.
	Finally, the algorithm selects the followers' action profile associated with the highest max-min expression value.
	
	In conclusion, note that, given some $a \in A_D$, $\beta^a$ is true if and only if $s_n^a$ is such that there is at least one follower $p$ who has a best-response $a_p'$ that is not in $T_p$, i.e., at least one variable $\zeta_{a_p'}^p$ is zero. 
	Thus, if $\beta^{a^*}$ is true, the leader's utility function does not admit a maximum, since for $s_n^{a^*}$ there is some follower who can play a best-response which is worse than the one played in $a^*$ in terms of leader's utility.
	If that is the case, $\Call{Find-Apx}{ \{ T_p \}_{p \in F}, a^*, v^{a^*}, \alpha }$ finds an $\alpha$-approximation of the supremum $v^{a^*}$ by looking for a strategy $s_n \in \Delta^o(a^*)$, with the additional constraints imposing that the leader's utility (in the pessimistic case) does not fall below $v^{a^*} - \alpha$.
	Such approximation always exists since $\Delta^o(a^*)$ is non-empty and the leader's utility is the minimum of a finite set of affine functions.
	\hfill $\Box$
\end{proof}

\textbf{Discussion}. Even though, as described next, one can adopt the algorithm proposed in~\cite{von2010leadership} to find a P-LFE in an OLTPG, this would result in a procedure that is more inefficient than Algorithm~\ref{alg:exact}. Indeed, one should first transform an OLTPG into an int-BG, by means of the mapping provided in Theorem~\ref{thm:equivalence}, and, then, cast the resulting game in normal form. However, this would require the solution of an exponential number of LP programs, each with an exponential number of constraints, since the number of actions of the resulting normal-form game is exponential in the size of the original game. 
Conversely, Algorithm~\ref{alg:exact} exploits the separability of players' utilities, avoiding the explicit construction of the normal form before the execution of the algorithm. As a result, our algorithm still requires the solution of an exponential number of LP programs, but each with a polynomial number of constraints.
Notice that avoiding the explicit construction of the normal form also allows the execution of Algorithm~\ref{alg:exact} in an anytime fashion, stopping the algorithm whenever the available time is expired.

\textbf{Experimental evaluation}. We ran Algorithm~\ref{alg:exact} on a testbed of OLTPGs, evaluating the running time as a function of the number of players $n$ and the number of actions per player $m$. Specifically, for each pair $(n,m)$, times are averaged over 20 game instances, with $n \in \{3, \ldots, 10\}$ and $m \in \{4, 6, \ldots\, 10, 15, \ldots, 70 \}$.
Game instances have been randomly generated, with each payoff uniformly and independently drawn from the interval $[0,100]$.
All experiments are run on a UNIX machine with a total of 32 cores working at 2.3 GHz, and equipped with 128 GB of RAM. Each game instance is solved on a single core, within a time limit of 7200 seconds.
The algorithm is implemented in Python~2.7, while all LP programs are solved with GUROBI~7.0, using the Python interface.
Figure~\ref{fig:plots} contains two plots of the average computing times, as a function of $n$ and $m$, respectively.

\begin{figure}[htbp]
	\begin{center}
		\includegraphics[scale=0.33]{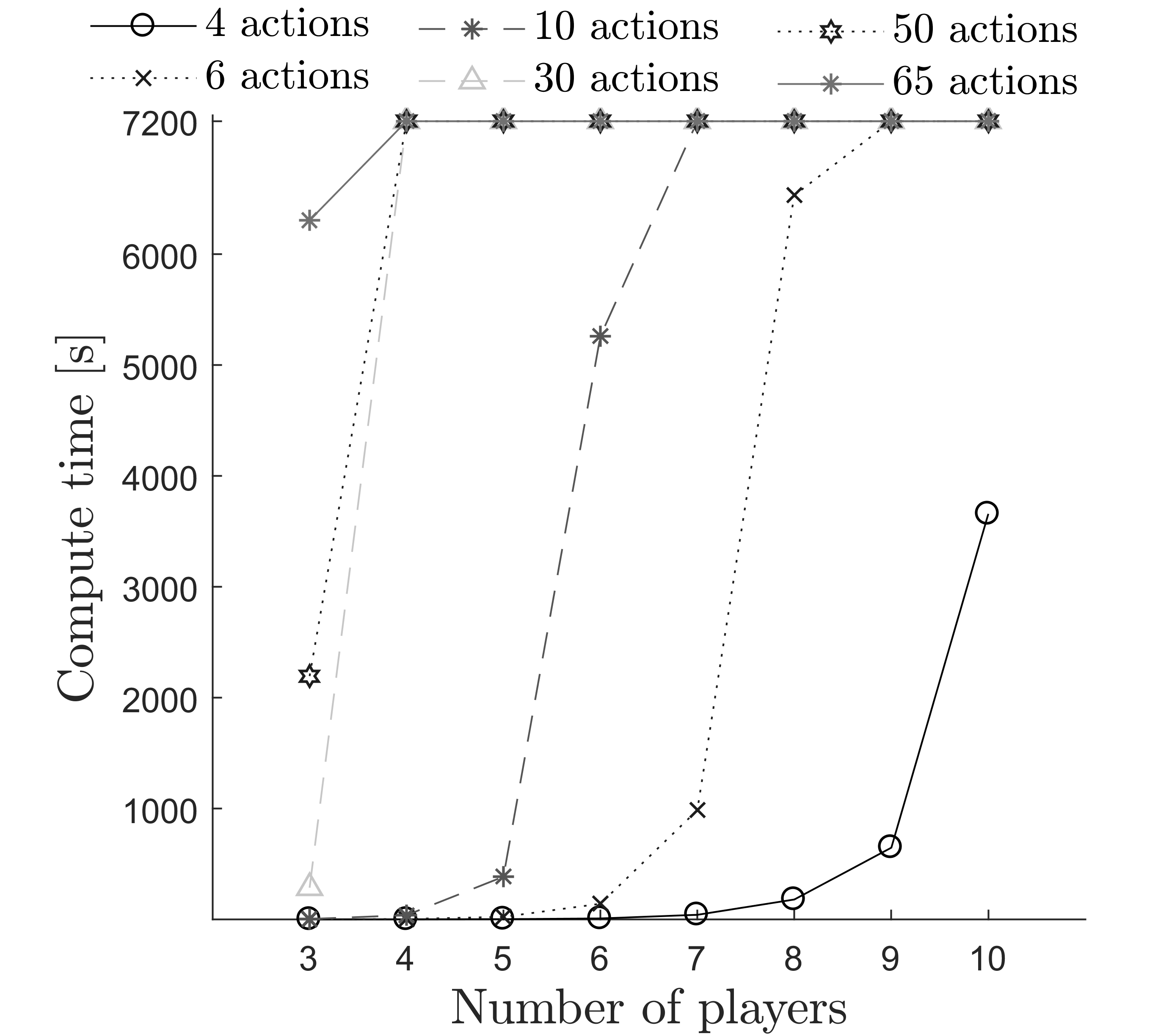}
		\includegraphics[scale=0.052]{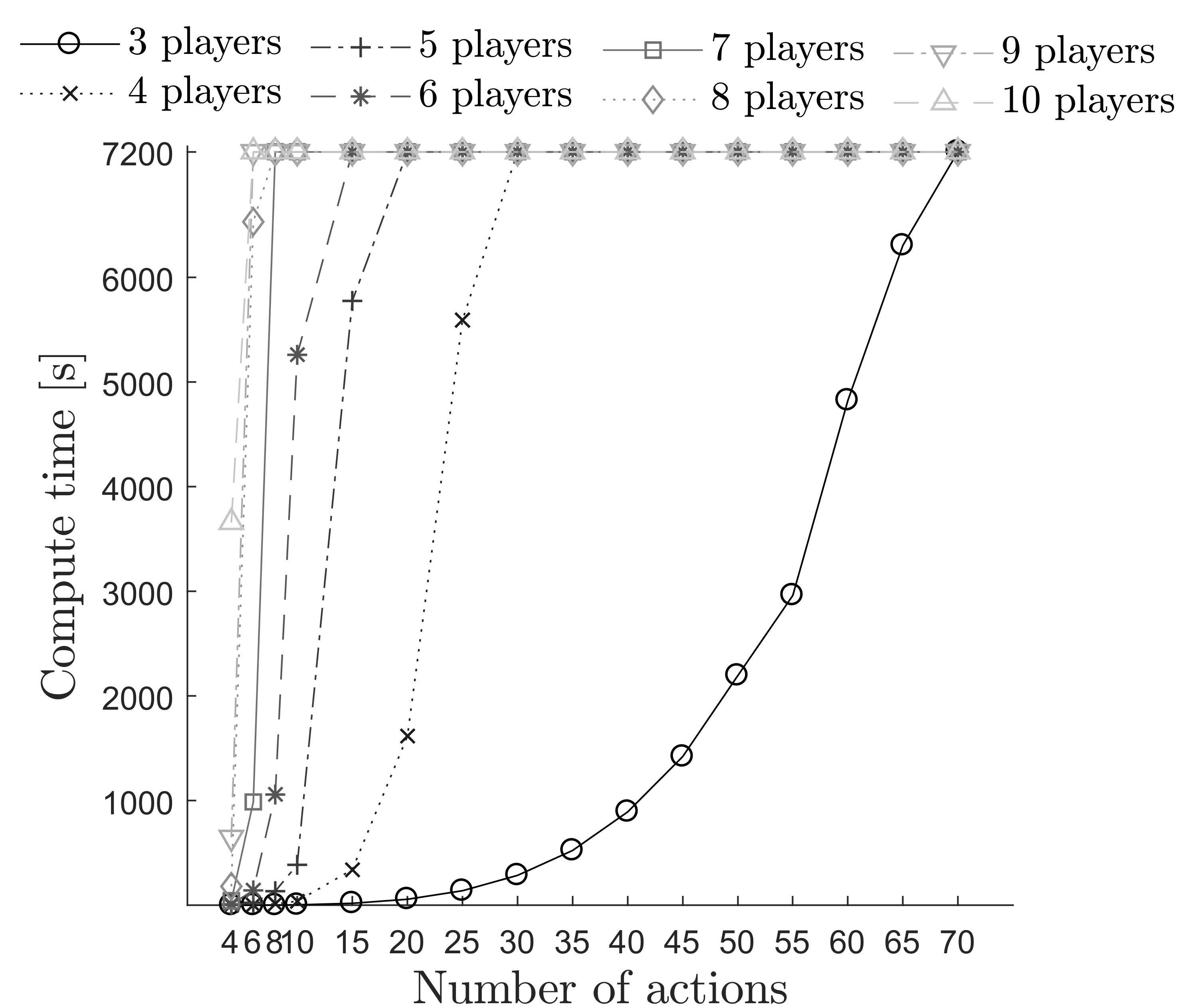}
	\end{center}
	\caption{Average computing times (in seconds), as a function of the number of players $n$ (on the left), and as a function of the number of actions per player $m$ (on the right).}
	\label{fig:plots}
\end{figure}

We observe that, as expected, the computing time increases exponentially in the number of players $n$, while, once $n$ is fixed, the growth is polynomial in the number of actions $m$. 
Specifically, the algorithm is able to solve within the time limit instances with 3 players, up to within $m=65$, while, as the number of players increases, the scalability w.r.t. $m$ decreases considerably, e.g., with 10 players, the algorithm can solve games with at most $m=4$.

\section{Approximating a Pessimistic Equilibrium} \label{sec:apx}

Initially, we study the computational complexity of approximating a pessimistic equilibrium.

\begin{theorem}\label{thm:poly_apx_hardness}
	Finding a P-LFE in SPGs is Poly-$\mathsf{APX}$-hard.
\end{theorem}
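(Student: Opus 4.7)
The plan is to give a gap-preserving reduction from Maximum Clique — which is Poly-$\mathsf{APX}$-hard, being inapproximable within $n^{1-\epsilon}$ unless $\mathsf{P}=\mathsf{NP}$ — to computing the leader's value at a P-LFE in an SPG. Given a graph $G=(V,E)$ with $|V|=n$, I would build an SPG whose leader actions correspond to the vertices of $V$ and which has one follower $f_v$ for each vertex. As required by the SPG definition, all followers share a common action set, which I take to be $\{Y,N\}$, and the leader's bilateral payoff $U_r$ is the same in every sub-game; I fix $U_r(\cdot,Y)=1$ and $U_r(\cdot,N)=0$, so the leader's total utility simply counts the number of followers playing $Y$.

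The crux of the reduction is the follower-specific payoff $U_{f_v,r}$, which I would design so that $f_v$ strictly prefers $Y$ iff the leader's mixed strategy $s_n$ is supported on the closed neighbourhood $N_G[v]$ of $v$. A concrete realisation sets $U_{f_v,r}(u,Y)=1$ for $u\in N_G[v]$ and $U_{f_v,r}(u,Y)=-M$ for $u\notin N_G[v]$ with $M$ sufficiently large, while $U_{f_v,r}(\cdot,N)$ is a constant slightly below $1$. A tiny perturbation of the payoffs then breaks every indifference strictly, which has the pleasant consequence that the optimistic and pessimistic best-response rules produce the same followers' behaviour — this is precisely the reason a single reduction simultaneously yields hardness for both O-LFE and P-LFE, as claimed earlier in the paper.

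Correctness proceeds in two directions. If $G$ contains a clique $C$ of size $k$, the leader plays the uniform distribution on $C$: for every $v\in C$, the support of $s_n$ is $C\subseteq N_G[v]$, forcing $f_v$ to play $Y$ and yielding P-LFE value at least $k$. Conversely, if a set $S$ of followers plays $Y$ under some leader's strategy $s_n$, then every $v\in S$ is adjacent in $G$ to every element of $\text{supp}(s_n)\setminus\{v\}$; after augmenting $G$ with a padding gadget (e.g., replacing each vertex with a clique of carefully chosen size, or adding dummy leader actions) that prevents the leader from inflating $|S|$ via common neighbours of a very small support, $|S|$ is forced to lie within a polynomial factor of $\omega(G)$. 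Hence any polynomial-factor approximation of the P-LFE value translates into a polynomial-factor approximation of $\omega(G)$, and Poly-$\mathsf{APX}$-hardness follows.

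The main obstacle is precisely the padding/calibration step that closes the gap between $|S|$ and $\omega(G)$ tightly enough to preserve the $n^{1-\epsilon}$ inapproximability, while simultaneously keeping the ``$Y$ iff $\text{supp}(s_n)\subseteq N_G[v]$'' characterisation strict under the adversarial pessimistic tie-breaking. A secondary technical care is needed to ensure that the construction yields a valid SPG (rather than a more general OLTPG), since the leader's bilateral payoff must be literally identical across all followers; with the $U_r(\cdot,Y)=1,\, U_r(\cdot,N)=0$ choice above this holds by design, and all graph-dependent information is pushed into the follower-specific matrices $U_{f_v,r}$.
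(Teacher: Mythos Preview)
Your high-level plan matches the paper's exactly: reduce from \textsc{Maximum Clique}, one leader action per vertex, one follower per vertex with a binary action set $\{Y,N\}$, and a common leader payoff $U_r(\cdot,Y)=1$, $U_r(\cdot,N)=0$ so that the leader's value counts the number of $Y$-followers. The point that a strict-preference construction simultaneously handles the optimistic and pessimistic cases is also the paper's punchline.

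The gap is precisely the one you flag yourself as ``the main obstacle'', and it is fatal as stated. With $U_{f_v,r}(u,Y)=1$ for $u\in N_G[v]$ and $-M$ otherwise, and $U_{f_v,r}(\cdot,N)\approx 1$, the follower $f_v$ plays $Y$ essentially iff $\operatorname{supp}(s_n)\subseteq N_G[v]$. But then the leader can commit to a \emph{pure} strategy on any single vertex $u$, and every $v\in N_G[u]$ will play $Y$: the leader's value becomes $\deg(u)+1$, which can equal $n$ even when $\omega(G)=2$. Your converse argument (``$S$ is adjacent to every element of $\operatorname{supp}(s_n)\setminus\{v\}$'') therefore does \emph{not} bound $|S|$ in terms of $\omega(G)$ at all, let alone within a polynomial factor. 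The proposed fixes (vertex-blowups, dummy leader actions) do not obviously help: blowing up vertices into cliques inflates both $\omega(G)$ and the max degree by the same factor, and dummy leader actions do nothing unless the followers force the leader to use them.

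The paper avoids this with one neat twist in the follower payoffs: follower $p$'s reward for $Y$ depends \emph{only} on the leader playing $p$'s own vertex action, namely $U_{p,n}(a_1,a_n^p)=r$ and $U_{p,n}(a_1,a_n^i)=0$ for $i\neq p$; the graph structure is encoded instead in the $N$-payoff, via $U_{p,n}(a_0,a_n^i)=1+r^2$ for non-edges and $1$ for edges. This forces two things simultaneously: (i) $f_p$ plays $Y$ only if $s_n(a_n^p)\geq 1/r$, so every $Y$-follower lies in the support; and (ii) $f_p$ plays $Y$ only if no non-neighbour of $p$ carries mass $\geq 1/r$. Together these make the set of $Y$-followers a clique of size exactly the leader's value, and the reduction is approximation-preserving on the nose---no padding needed. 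Replacing your ``$Y$ iff support in $N_G[v]$'' design with this ``$Y$ requires mass on $v$ itself'' design closes the gap.
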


\begin{proof}
	We employ an approximation-preserving reduction from the maximum clique optimization problem, which is known to be Poly-$\mathsf{APX}$-hard~\cite{zuckerman2006linear}.
	
	\begin{definition}[MAXIMUM-CLIQUE (MC)]
		Given an undirected graph $G=(V,E)$, find a maximum clique of $G$, i.e., a complete sub-graph of $G$ with maximum size.
	\end{definition}
	
	First, we provide a polynomial mapping from MC to the problem of finding a P-LFE, reducing an arbitrary instance of MC to an SPG, and, then, we prove that the correspondence among instances is correct and the mapping is approximation-preserving.
	Letting $V = \{v_1, \ldots, v_r\}$, for every $v_p \in V$, we introduce a follower $p$, i.e., $N = \{1,\ldots,r,n\}$ with $n=r+1$. Each follower has two actions, i.e., $A_p = \{ a_0, a_1 \}$ for all $p \in F$, while the leader has an action per vertex, i.e., $A_n = \{a_n^1, \ldots, a_n^r\}$. Utilities are defined as follows:
	\begin{itemize}
		\item $U_{p,n}(a_0,a_n^i) = 1+r^2, $ for all $ (v_p,v_i) \notin E$;
		\item $U_{p,n}(a_0,a_n^i) = 1, $ for all $ (v_p,v_i) \in E$;
		\item $U_{p,n}(a_1,a_n^p) = r, $  for all $ v_p \in V$;
		\item $U_{p,n}(a_1,a_n^i) = 0, $ for all $ v_p, v_i \in V,$ with $ p\neq i$;
		\item $U_n(a_0,a_n^i) = 0, $ for all $ v_i \in V$;
		\item $U_n(a_1,a_n^i) = 1, $ for all $ v_i \in V$.
	\end{itemize} 

	Suppose that the graph $G$ admits a clique $C$ of size $J$. 
	W.l.o.g. we assume $J < r$ (the number of vertices of $G$), since instances with a maximum clique of size $r$ can be safely ruled out as we can check if the graph is complete in polynomial time. 
	Consider a mixed strategy of the leader such that each $a_n^i$ with $v_i \in C$ is played with probability equal to $\frac{1}{J}$. Then, each follower $p$ with $v_p \in C$ plays $a_1$: in fact, playing $a_1$, they get a utility of $\frac{r}{J} > 1$, while playing $a_0$ they can only get $1$, since no $a_n^i$ with $ (v_p,v_i) \notin E$ is ever played by the leader, being $C$ a clique.
	Therefore, the leader gets a utility of $|C|=J$ by playing such strategy.
	
	Suppose that, in a P-LFE of the SPG, the leader gets a utility equal to $J$ and, thus, given the definition of the game, there are exactly $J$ followers who play action $a_1$. Let us call $C$ the subset of vertices $v_p$ such that follower $p$ plays $a_1$: we prove that $C$ is a clique. In order for follower $p$ to play $a_1$ instead of $a_0$, the leader must play $a_n^p$ with probability greater than or equal to $\frac{1}{r}$, otherwise the follower would get a higher utility by playing $a_0$. Moreover, the leader cannot play any action $a_n^i$ such that $(v_p,v_i)\notin E$ with probability at least $\frac{1}{r}$, because otherwise the follower would play $a_0$, getting a utility greater than or equal to $1 + \frac{1}{r} \cdot r^2 = 1 + r$, which is clearly strictly greater than $r$ that is the maximum utility she can get by playing action $a_1$.
	Thus, the leader must play all the $J$ actions $a_n^p$ such that $v_p \in C$ with probability at least $\frac{1}{r}$, and there is no pair of vertices $v_p, v_i \in C$ such that $(v_p,v_i) \notin E$.
	So, the vertices in $C$ are completely connected, meaning that $C$ is a clique of size $J$.
	
	The reduction is approximation-preserving since the leader's utility coincides with the cardinality of the clique. Thus, given that MC is Poly-$\mathsf{APX}$-hard, the result follows. 
	Notice that the reduction works in both the optimistic and the pessimistic cases, as there is no follower who is indifferent among multiple best-responses.
	\hfill $\Box$
\end{proof}

Now, we provide a polynomial-time approximation algorithm for the P-LFE finding problem that guarantees an approximation factor polynomial in the size of the game, thus showing that the problem belongs to the Poly-$\mathsf{APX}$ class.

\begin{theorem}\label{thm:in_poly_apx}
	Computing a P-LFE in OLTPGs is in Poly-$\mathsf{APX}$.
\end{theorem}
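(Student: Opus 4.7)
The plan is to build a polynomial-time approximation algorithm for P-LFE in OLTPGs whose multiplicative guarantee is polynomial in the input size, giving Poly-$\mathsf{APX}$ membership. The pivotal structural fact about OLTPGs is that there are no follower--follower edges, so each follower $p \in F$ best-responds to the leader's commitment $s_n$ independently of the others and, under pessimistic tie-breaking, selects an $a_p \in BR_p(s_n) := \argmax_{a_p' \in A_p}\sum_{a_n} U_{p,n}(a_p',a_n)\,s_n(a_n)$ minimising $\sum_{a_n} U_{n,p}(a_p,a_n)\,s_n(a_n)$. The leader's pessimistic value at $s_n$ therefore decomposes additively as $\sum_{p \in F} \phi_p(s_n)$, where $\phi_p(s_n) := \min_{a_p \in BR_p(s_n)} \sum_{a_n} U_{n,p}(a_p,a_n)\,s_n(a_n)$. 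I assume WLOG $U_{n,p} \ge 0$ by shifting each leader payoff by a constant (multiplicative approximation is defined on a non-negative objective).

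Given this, the algorithm I would run is: for every follower $p \in F$, solve the 2-player pessimistic leader-follower equilibrium induced by the edge $(p,n)$ in isolation, invoking the polynomial procedure of~\cite{von2010leadership}, and obtain the exact sup value $v_p^* := \sup_{s_n}\phi_p(s_n)$. Let $p^* \in \argmax_{p \in F} v_p^*$. If $v_{p^*}^* = 0$, return any strategy (the upper bound below forces $v^*_{\text{OPT}} = 0$); otherwise, compute a leader strategy $s_n^{p^*}$ with $\phi_{p^*}(s_n^{p^*}) \ge v_{p^*}^*/2$ via an LP analogous to \textsc{Find-Apx} in Algorithm~\ref{alg:exact}, restricted to the single edge $(p^*,n)$ and parametrised with additive tolerance $v_{p^*}^*/2$; return $s_n^{p^*}$.

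Correctness reduces to two estimates. Upper bound: for every $s_n$ and every $p$, $\phi_p(s_n) \le v_p^*$ by definition of $v_p^*$; summing over $p$ and supping over $s_n$ gives $v^*_{\text{OPT}} \le \sum_{p \in F} v_p^* \le (n-1)\, v_{p^*}^*$. Lower bound: the returned strategy yields pessimistic value $\sum_{p \in F} \phi_p(s_n^{p^*}) \ge \phi_{p^*}(s_n^{p^*}) \ge v_{p^*}^*/2$, using $\phi_p \ge 0$ for the remaining followers. Combining the two bounds, the algorithm achieves at least $v^*_{\text{OPT}}/(2(n-1))$, i.e., a polynomial approximation factor, which proves membership in Poly-$\mathsf{APX}$.

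The main delicate step I expect to spend care on is the supremum-versus-maximum issue: the 2-player pessimistic optimum may not be attained, so the second LP call must be run with a positive additive tolerance, and the convenient choice $\varepsilon = v_{p^*}^*/2$ becomes available only after the first pass has pinned down $v_{p^*}^*$ exactly, forcing the two phases to be sequential. Everything else --- shifting payoffs, iterating over the $O(n)$ followers, solving $O(n)$ polynomial-sized LPs, and taking a maximum --- is routine and clearly polynomial.
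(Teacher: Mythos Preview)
Your proposal is correct and follows essentially the same approach as the paper: solve the 2-player pessimistic leadership game against each follower independently, commit to the strategy from the best such game, and bound the optimum by $(n-1)\max_p v_p^*$ to obtain a $1/O(n)$ approximation. You are in fact more careful than the paper on two points it glosses over---the non-negativity assumption needed for the lower bound $\sum_{p}\phi_p(s_n^{p^*}) \ge \phi_{p^*}(s_n^{p^*})$, and the sup-versus-max issue, which you handle with the extra factor of~$2$---but the core argument is identical.
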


\begin{proof}
	To prove the result, we provide an algorithm $\mathcal{A}$ working as follows. 
	First, $\mathcal{A}$ makes the leader play a 2-player leadership game against each follower independently.
	Let $U_{n,p}$ be the utility the leader gets in the game played against follower $p \in F$.
	Then, the algorithm selects the leader's strategy which is played against a follower $p$ such that $U_{n,p}$ is maximum. 
	The utility the leader gets adopting the strategy computed by means of algorithm $\mathcal{A}$ is equal to $U_n^{APX} \geq \max_{p \in F} U_{n,p}$, while the utility she would get in a P-LFE is equal to $U_n^{OPT} \leq (n-1) \cdot \max_{p \in F} U_{n,p}$.
	Thus, algorithm $\mathcal{A}$ guarantees an approximation factor equal to  $\frac{U_n^{APX}}{U_n^{OPT}} \geq \frac{\max_{p \in F} U_{n,p}}{(n-1) \cdot \max_{p \in F} U_{n,p}} = \frac{1}{n-1} = \frac{1}{O(n)}$. This concludes the proof. \hfill $\Box$
\end{proof}

The next result directly follows from Theorems~\ref{thm:poly_apx_hardness} and~\ref{thm:in_poly_apx}.

\begin{theorem}\label{cor:poly_apx_complete}
	Computing a P-LFE in OLTPGs is Poly-$\mathsf{APX}$-complete.
\end{theorem}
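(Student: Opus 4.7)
The plan is that this result is essentially a one-line combination of the two preceding theorems, so the ``proof'' amounts to observing that the two required halves of Poly-$\mathsf{APX}$-completeness are already in place. First I would recall that, by definition, a problem is Poly-$\mathsf{APX}$-complete iff it lies in Poly-$\mathsf{APX}$ and every problem in Poly-$\mathsf{APX}$ reduces to it via an approximation-preserving reduction (equivalently, it is Poly-$\mathsf{APX}$-hard).

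For membership, I would simply invoke Theorem~\ref{thm:in_poly_apx}: the algorithm $\mathcal{A}$ from its proof, which commits the leader to the best strategy obtained from the $n-1$ independent $2$-player leadership games, yields a feasible P-LFE with an approximation ratio of $\frac{1}{n-1}$, which is polynomial in the size of the game. Hence computing a P-LFE in OLTPGs is in Poly-$\mathsf{APX}$.

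For hardness, I would invoke Theorem~\ref{thm:poly_apx_hardness}, which gives an approximation-preserving reduction from MAXIMUM-CLIQUE to the problem of finding a P-LFE in SPGs. The key observation I would make explicit is that, by Definition, every SPG is itself an OLTPG (it is an OLTPG that additionally satisfies $U_{r,p}=U_{r,q}=U_r$ for all leaf-players $p,q$); therefore the same instances produced by the reduction can be interpreted as OLTPG instances, and the reduction lifts immediately to OLTPGs, giving Poly-$\mathsf{APX}$-hardness for the broader class.

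There is no real obstacle here---the argument is just the union of (i) the approximation algorithm of Theorem~\ref{thm:in_poly_apx} and (ii) the subclass inclusion $\mathrm{SPG}\subseteq\mathrm{OLTPG}$ applied to Theorem~\ref{thm:poly_apx_hardness}. The only subtlety worth flagging explicitly is that one must check the subclass direction: hardness for the \emph{smaller} class SPG must be translated into hardness for the \emph{larger} class OLTPG, and this is precisely the direction in which the inclusion is useful (any algorithm for OLTPGs in particular solves SPGs, so its approximation guarantees would transfer to SPGs, contradicting Theorem~\ref{thm:poly_apx_hardness} unless the corresponding hardness also holds for OLTPGs).
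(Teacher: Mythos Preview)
Your proposal is correct and matches the paper's approach exactly: the paper simply states that the result follows directly from Theorems~\ref{thm:poly_apx_hardness} and~\ref{thm:in_poly_apx}, and your write-up makes explicit the one routine step (the inclusion $\mathrm{SPG}\subseteq\mathrm{OLTPG}$ that transfers hardness) that the paper leaves implicit.
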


\section{Equilibrium Approximation in General Polymatrix Games}

From the previous sections, we know that the problem of computing an O/P-LFE is Poly-$\mathsf{APX}$-complete when instances are restricted to a specific class of games, namely OLTPGs.
In this section, we investigate the approximability of the problem of computing an O/P-LFE in PGs, when the followers are restricted to play pure strategies.

First, we prove that, when the number of followers is non-fixed, computing an O-LFE in PGs is not in Poly-$\mathsf{APX}$ unless $\mathsf{P}$ = $\mathsf{NP}$, and, thus, there is no polynomial-time approximation algorithm providing good (as the size of the input grows) approximation guarantees.

\begin{theorem}
	The problem of computing an O-LFE in PGs is not in Poly-$\mathsf{APX}$, unless $\mathsf{P}$ = $\mathsf{NP}$.
\end{theorem}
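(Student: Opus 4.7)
The plan is to establish the hardness via a polynomial-time reduction from 3-SAT that produces a super-polynomial gap in the leader's value. Given a 3-SAT formula $\phi$ on $n$ variables and $m$ clauses, I would construct a polymatrix game $G_\phi$ in which the leader's optimal O-LFE value is at least some positive quantity $V$ if $\phi$ is satisfiable and at most $V/\gamma$ otherwise, where $\gamma$ is super-polynomial in the size of the encoded input. Any polynomial-time algorithm returning a polynomial-factor approximation would then decide satisfiability of $\phi$ in polynomial time, forcing $\mathsf{P}=\mathsf{NP}$.

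For the construction, the leader would be given $2n$ literal-actions $\{T_i, F_i\}_{i=1}^{n}$, so that a pure strategy on these actions encodes a truth assignment via a one-hot choice per variable. Two families of followers would then be attached to the leader. First, for every variable $x_i$ a \emph{variable-follower} $v_i$ whose best-response gadget contributes a large negative payoff to the leader whenever the leader places positive mass on both $T_i$ and $F_i$, thereby forcing any near-optimal commitment to be a pure $\{0,1\}$-assignment. Second, for every clause $C_j$ a \emph{clause-follower} $c_j$ with one action per literal of $C_j$, whose payoffs are tuned so that under optimistic tie-breaking the follower picks a literal satisfied by the induced assignment whenever one exists, rewarding the leader by a bounded positive amount, and otherwise contributing a super-polynomial negative payoff. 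Summing contributions over all followers and shifting by a fixed constant to ensure nonnegativity, the leader's value would reach the target $V$ exactly when the induced assignment is both pure and satisfying.

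The main obstacle is the design of the variable-follower gadget. Because a single follower's utility is a linear function of the leader's strategy, no linear utility can by itself detect whether the leader's mass is concentrated on a single literal of $\{T_i,F_i\}$. The plan is to attach to $v_i$ an auxiliary ``abstain'' action whose constant utility is calibrated to be strictly best whenever the leader leaks positive probability onto both $T_i$ and $F_i$, together with a leader-side payoff on the abstain action that is so negative that it outweighs the total positive contribution obtainable from all clause-followers combined. The delicate point is to choose this constant and the clause-follower payoffs so that the gadget is robust against optimistic tie-breaking, that no near-pure mixture can game the inequalities, and, in the soundness direction, that no leader strategy in an unsatisfiable instance can simultaneously evade every variable-gadget penalty and every clause-follower penalty. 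Once the gadgets are specified, completeness is immediate---a satisfying assignment triggers the positive contributions---and soundness follows from an exhaustive case analysis exploiting the additive structure of polymatrix payoffs.
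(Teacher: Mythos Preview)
Your construction, as described, has every follower interacting only with the leader: each variable-follower's best response is a function of $s_n$ alone, and the same holds for each clause-follower. This makes the game an OLTPG, and the paper has already established (via the equivalence with int-BGs and~\cite{letchford2009learning}) that computing an O-LFE in OLTPGs is Poly-$\mathsf{APX}$-\emph{complete}, hence \emph{in} Poly-$\mathsf{APX}$. Concretely, once you shift to nonnegative payoffs, the leader's utility is an additive sum of per-follower contributions; optimizing against a single follower already secures a $1/(\text{number of followers})$ fraction of the optimum. In your setting this is visible directly: if the shift $S$ is large enough to make every outcome nonnegative, then in the YES case the shifted optimum is roughly $S+m\cdot\text{reward}$, while in the NO case (one clause failing) it is roughly $S+(m-1)\cdot\text{reward}-\text{penalty}$; since $S$ must dominate all the penalties, the ratio collapses to about $m/(m-1)$, not anything super-polynomial. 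No choice of penalty magnitudes or gadget thresholds can overcome this, because the obstruction is structural: without follower--follower edges, the followers' equilibrium is just the product of independent best responses, and additivity kills the gap.

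The paper's reduction works precisely because it leaves the OLTPG class. Each clause-follower has an extra action $a_0$, and the \emph{pairwise follower games} are designed so that $(a_0,\ldots,a_0)$ is always a pure NE, any profile mixing $a_0$ with literal actions is never an NE, and in the NO case no all-literal profile is an NE either; hence in the NO case the unique pure NE gives the leader only $\epsilon=2^{-n}$. The truth assignment is encoded not via one-hot literal actions (which would require your problematic variable gadget) but by thresholds on the leader's mixed strategy over variable-indexed actions, $s_n(a_v)\gtrless 1/(r+1)$, so no consistency follower is needed at all. The obstacle you flag in the variable gadget is genuine---best-response regions are polyhedral, so no follower connected only to the leader can detect the non-convex event ``$s_n(T_i)>0$ and $s_n(F_i)>0$''---but even a perfect such gadget would not save the argument; you need follower--follower interaction.
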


\begin{proof}
	We provide a reduction from 3-SAT.
	
	{\em Mapping.} Given a 3-SAT instance, i.e., a set of variables $V = \{v_1, \ldots, v_r\}$ and a set of 3-literal clauses $C = \{\phi_1, \ldots, \phi_s\}$, we build a PG with $n = s+1$ players, as follows. The set of players is $N = \{1, \ldots, s, n\}$, where the first $s$ players, the followers, are associated with the clauses in $C$, i.e., letting $F = \{1, \ldots, s\}$, follower $p \in F$ corresponds to $\phi_p \in C$. The leader (player $n$) has an action for each variable in $V$, plus an additional one, i.e., $A_n = \{ a_{v_1}, \ldots, a_{v_r}, a_w \}$ (where $w \notin V$). On the other hand, each follower has only four actions, namely $A_p = \{ a_0, a_1, a_2, a_3 \}$ for every $p \in F$. For any clause $\phi_p \in C$, with $\phi_p = l_1 \vee l_2 \vee l_3$, the payoffs of the corresponding follower $p$ are so defined:
	\begin{itemize}
		\item $U_{p, n}(a_i,a_v) = r+1$ if $v = v(l_i)$ and $l_i$ is positive, for every $i \in \{1,2,3\}$ (where $v(l_i)$ denotes the variable of $l_i$);
		\item $U_{p, n}(a_i,a_v) = 0$ if $v \neq v(l_i)$ and $l_i$ is positive, for every $i \in \{1,2,3\}$;
		\item $U_{p, n}(a_i,a_v) = 0$ if $v = v(l_i)$ and $l_i$ is negative, for every $i \in \{1,2,3\}$;
		\item $U_{p, n}(a_i,a_v) = \frac{r+1}{r}$ if $v \neq v(l_i)$ and $l_i$ is negative, for every $i \in \{1,2,3\}$;
		\item $U_{p, n}(a_0,a_n) = 0$ for every $a_n \in A_n$;
		\item $U_{p, q}(a_p, a_q) = 0$ for $a_p \in A_p \setminus \{a_0\}$ and $a_q \in A_q$, for every $q \in F \setminus \{p\}$;
		\item $U_{p, q}(a_0, a_q) = \frac{1}{s-1}$ for $a_q \in A_q \setminus \{a_0\}$, for every $q \in F \setminus \{p\}$;
		\item $U_{p, q}(a_0, a_0) = r+1$ for every $q \in F \setminus \{p\}$;
	\end{itemize}
	The leader's payoffs are defined as follows:
	\begin{itemize}
		\item $U_{n, p}(a_p,a_n) = \frac{1}{s}$ for every $a_n \in A_n$, $a_p \in A_p \setminus \{a_0\}$, and $p \in F$;
		\item $U_{n, p}(a_0,a_n) = \frac{\epsilon}{s}$ for every $a_n \in A_n$,
	\end{itemize}
	where $\epsilon > 0$ is an arbitrarily small positive constant. In the following, for ease of presentation and with abuse of notation, we define $U_{p,n}(a_p,s_n)$ as the utility follower $p \in F$ expects to obtain by playing against the leader, when the latter plays strategy $s_n \in \Delta_n$, i.e., $U_{p,n}(a_p,s_n) = \sum_{a_n \in A_n} U_{p,n}(a_p,a_n) \,\, s_n(a_n)$. Furthermore, given a truth assignment to the variables $T : V \rightarrow \{0,1\}$, let us define $s(T)$ as the set of leader's strategies $s_n \in \Delta_n$ such that $s_n(a_v) > \frac{1}{r+1}$ if $T(v) = 1$, while $s_n(a_v) < \frac{1}{r+1}$ whenever $T(v) = 0$. Clearly, no matter the truth assignment $T$, the set $s(T)$ is always non-empty, as one can make the probabilities in the strategy $s_n$ sum up to one by properly choosing $s_n(a_w)$. On the other hand, given a leader's strategy $s_n \in \Delta_n$, we define $T^{s_n}$ as the truth assignment in which $T^{s_n}(v) = 1$ if $s_n(a_v) > \frac{1}{r+1}$, while $T^{s_n}(v) = 0$ whenever $s_n(a_v) < \frac{1}{r+1}$ (the case $s_n(a_v) = \frac{1}{r+1}$ deserves a different treatment, although the proof can be easily extended to take it into consideration, we omit it for simplicity).
	Finally, without loss of generality, let us assume $s \geq 3$.
	
	Initially, we introduce the following lemma.
	
	\begin{lemma}\label{lemma}
		For any leader's strategy $s_n \in \Delta_n$, there exists an action $a_p \in A_p \setminus \{a_0\}$ such that $U_{p,n}(a_p,s_n) > 1$ if and only if $\phi_p$ evaluates to true under $T^{s_n}$.
	\end{lemma}
	
	\begin{proof}
		Suppose that $T^{s_n}$ makes $\phi_p = l_1 \vee l_2 \vee l_3$  true, and let $l_i$ be one of the literals that evaluate to true in $\phi_p$ (at least one must exist). Clearly, given the definition of $T^{s_n}$, $s_n(a_v) > \frac{1}{r+1}$ if $l_i$ is positive, whereas $s_n(a_v) < \frac{1}{r+1}$ when $l_i$ is negative. Two cases are possible. If $l_i$ is positive, then $U_{p,n}(a_i,s_n) = s_n(a_v) \cdot (r+1) > 1$, while, if $l_i$ is negative we have $U_{p,n}(a_i,s_n) = (1-s_n(a_v))\cdot \frac{r+1}{r} > 1$. Thus, $a_i \in A_p \setminus \{a_0\}$ is the action we are looking for.
		
		Now, let us prove the other way around. Suppose $a_p \in A_p \setminus \{a_0\}$ is such that $U_{p,n}(a_p,s_n) > 1$ and consider the case in which $a_p = a_i$ and literal $l_i$ is positive in $\phi_p$ (similar arguments also hold for the case where $l_i$ is negative). Letting $v=v(l_i)$, it easily follows that $s_n(a_v) \cdot (r+1) > 1$, implying that $s_n(a_v) > \frac{1}{r+1}$. Thus, given the definition of $T^{s_n}$, $\phi_p$ must evaluate to true.
		\hfill $\Box$
	\end{proof}
	
	{\em YES-instance.} Suppose that the given 3-SAT instance has a YES answer, i.e., there exists a truth assignment $T$ that satisfies all the clauses. We prove that, if this is the case, then in an O-LFE the leader gets a utility of $1$.
	Consider a leader's strategy $s_n \in s(T)$ and a followers' action profile $a \in \bigtimes_{p \in F} A_p$ where follower $p$'s action $a_p$ is such that $a_p = a_i$ and literal $l_i$ of $\phi_p$ evaluates to true under truth assignment $T$. Clearly, the action profile is always well-defined since $T$ satisfies all the clauses. Moreover, when there are many possible choices for action $a_p$, we assume that the follower plays the one providing her with the maximum utility given $s_n$.
	Now, we prove that $a$ is a pure NE in the followers' game resulting from the leader's commitment to $s_n$.
	Let $p \in F$ be a follower. Clearly, the follower's expected utility in action profile $a$ is $U_{p,n}(a_p,s_n)$ since she gets $0$ by playing against the other followers.
	The follower could deviate from $a_p$ in two different ways, either by playing an action corresponding to a different literal in the clause or by playing $a_0$. In the first case, the follower cannot get more than what she gets by playing $a_p$, given the definition of $a_p$. In the second case, the follower gets $(s - 1) \cdot \frac{1}{s - 1} = 1$, which is the utility obtained by playing against the other followers.
	Observing that $T$ is actually the same as $T^{s_n}$ and using Lemma~\ref{lemma}, we conclude that $U_{p,n}(a_p,s_n) > 1$ and no follower has an incentive to deviate from $a$, which makes it a pure NE given $s_n$.
	Finally, since we are in the optimistic case, the followers always play $a$ since it is the NE maximizing the leader's utility, as, in it, the leader gains $s \cdot \frac{1}{s} = 1$, which is the maximum payoff she can get. Moreover, for the same reason, the leader's utility in an O-LFE is~$1$.
	
	{\em NO-instance.} Suppose the 3-SAT instance has a NO answer, i.e., there is no truth assignment which satisfies all the clauses. 
	First, we prove that the followers' action profile $a \in \bigtimes_{p \in F} A_p$ in which all the followers play $a_0$ is a pure NE, no matter the leader's strategy $s_n$. 
	In $a$, every follower gets a utility of $(s-1) \cdot (r+1)$ which does not depend on the leader's strategy. Now, suppose that follower $p \in F$ deviates from $a$ by playing some action $a_p \neq a_0$, then she would get $U_{p,n}(a_p,s_n) \leq r + 1$, which is clearly strictly less than $r \cdot (s-1)$ given the assumption $s \geq 3$. Hence, $a$ is always a pure NE in the followers' game and it provides the leader with a utility of $s \cdot \frac{\epsilon}{s}= \epsilon$.
	Finally, we show that, for all leader's strategies $s_n \in \Delta_n$, there cannot be other NEs in the followers' game, and, thus, $a$ is the unique NE the followers can play.
	Let us start proving that all the action profiles in which some followers play $a_p \neq a_0$ and some others play $a_0$ cannot be NEs. 
	Let $p \in F$ be a follower such that $a_p \neq a_0$. Clearly, $p$ has an incentive to deviate by playing $a_0$ since $U_{p,n}(a_p,s_n) \leq r+1 < \sharp_i \cdot \frac{1}{s-1} + \sharp_0 \cdot (r+1)$ given that $\sharp_0 \geq 1$, where $\sharp_i$ is the number of followers other than $p$ who are playing $a_p \neq a_0$ and $\sharp_0$ is the number of followers playing $a_0$.
	In conclusion, it remains to prove that the followers' action profile in which they all play actions $a_p \neq a_0$ cannot be an NE. Let $p \in F$ be a follower such that $\phi_p$ is false under truth assignment $T^{s_n}$ (she must exist, as, otherwise, the 3-SAT instance would have answer YES). Clearly, $p$ has incentive to deviate playing $a_0$ since, using Lemma~\ref{lemma}, $U_{p,n}(a_p,s_n) < 1 = (s-1) \cdot \frac{1}{s-1}$.
	Therefore, in an O-LFE, the leader must get a utility of $\epsilon$.
	
	{\em Contradiction.} Suppose there exists a polynomial-time approximation algorithm $\mathcal{A}$ with approximation factor $r = \frac{1}{f(n)}$, where $f(n)$ is any polynomial function of $n$. Moreover, let us fix $\epsilon = \frac{1}{2^n}$ (notice that the polynomiality of the reduction is preserved, as $\epsilon$ can still be represented with a number of bits polynomial in $n$). If the 3-SAT instance has answer YES, then $\mathcal{A}$, when applied to the corresponding polymatrix game, must return a solution with value greater than or equal to $\frac{1}{f(n)} > \epsilon$. Instead, if the answer is NO, $\mathcal{A}$ must return a solution of value $\frac{\epsilon}{f(n)} < \epsilon$. Thus, the existence of $\mathcal{A}$ would imply that 3-SAT is solvable in polynomial time (the answer is YES if and only if the returned solution has value greater than $\epsilon$), which is an absurd, unless $\mathsf{P}$ = $\mathsf{NP}$. 
	\hfill $\Box$
\end{proof}

Finally, we show that approximating a P-LFE in PGs is harder than approximating an O-LFE, the problem being not in Poly-$\mathsf{APX}$ even when the number of followers is fixed.

\begin{theorem}\label{thm:not_poly_pes}
	Computing a P-LFE in PGs is not in Poly-$\mathsf{APX}$ even when $n=4$, unless $\mathsf{P}$ = $\mathsf{NP}$.
\end{theorem}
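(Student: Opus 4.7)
The plan is to give an approximation-preserving reduction from 3-SAT, producing a PG with exactly four players (one leader and three followers) whose P-LFE value jumps by a super-polynomial factor between YES and NO instances, exactly in the style of the previous theorem. Fixing $n=4$ means the 3-SAT witness can no longer be distributed across many followers as in the optimistic reduction, so I would push the verification into the action sets of a constant number of followers and exploit the adversarial tie-breaking to enforce it.

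Given a 3-SAT formula with variables $V = \{v_1,\ldots,v_r\}$ and clauses $C = \{\phi_1,\ldots,\phi_s\}$, I would let the leader's actions be $A_n = \{a_{v_1},\ldots,a_{v_r},a_w\}$, encoding a truth assignment through the threshold rule $T^{s_n}(v)=1 \iff s_n(a_v) > \tfrac{1}{r+1}$, exactly as before. The three followers split the verification: follower $1$ has one action per clause (plus a ``sink'' action $a_0$), follower $2$ has actions corresponding to literal positions inside a clause, and follower $3$ is an auxiliary ``coordinator'' whose polymatrix edges to followers $1$ and $2$ are used to implement the adversarial tie-break. Utilities are designed so that, for any leader strategy $s_n$, the followers' game decomposes into two kinds of pure NE: a \emph{cooperative} one, giving the leader utility $\geq 1$, which is available precisely when the clause chosen by follower $1$ is satisfied under $T^{s_n}$ for every strategy of follower $1$; and an \emph{adversarial} one, giving the leader utility $\leq \epsilon$, in which follower $1$ selects an unsatisfied clause, follower $2$ selects a literal of that clause, and follower $3$ confirms the falsification via the inter-follower payoffs. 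Analogously to the $\epsilon$-trick in the preceding proof, the leader's payoffs make the two regimes differ by a factor $1/\epsilon$, with $\epsilon = 2^{-q(n)}$ for a suitable polynomial $q$.

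Combining the two gadgets, if $\phi$ is satisfiable, the leader commits to any $s_n \in s(T)$ for a satisfying $T$; no unsatisfied clause is available, the adversarial NE disappears, and the P-LFE value is $\geq 1$ under pessimism. If $\phi$ is unsatisfiable, for every $s_n$ there is some clause unsatisfied under $T^{s_n}$ (including the boundary cases $s_n(a_v)=\tfrac{1}{r+1}$, which are handled by a slight perturbation of the thresholds, as in the prior proof), so the adversarial NE is always available and pessimistic followers always select it, capping the leader's utility at $\epsilon$. A poly-APX approximation with ratio $1/f(n)$ would then separate $\epsilon$ from $1/f(n)>\epsilon$, deciding 3-SAT in polynomial time and contradicting $\mathsf{P}\neq\mathsf{NP}$.

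The hard part is engineering the inter-follower polymatrix edges with only follower $3$ as coordinator so that (i) in the NO case, an adversarial NE exists for \emph{every} leader strategy, not just those encoding a consistent assignment, and (ii) in the YES case, the adversarial NE is eliminated whenever the leader plays $s_n \in s(T)$ for a satisfying $T$. The delicate point is that followers $1$ and $2$ move simultaneously, so the ``clause-then-literal'' structure has to be implemented purely through NE coordination incentives delivered by follower $3$, while also making sure no spurious NE with intermediate leader utility arises that would collapse the gap.
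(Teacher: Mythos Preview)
Your outer frame matches the paper's: reduce from 3-SAT, give the leader one action per variable plus a slack action, encode a truth assignment via the threshold $s_n(a_v)\gtrless\frac{1}{r+1}$, and create a $1$-versus-$\epsilon$ gap in the P-LFE value that a Poly-$\mathsf{APX}$ algorithm would have to separate. That part is fine and essentially identical to the paper.

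The follower gadget, however, is only sketched and is \emph{not} the one the paper uses. The paper does not assign asymmetric roles (clause-picker, literal-picker, coordinator). Instead, all three followers share the \emph{same} action set $A=\{\phi_{ca}\mid c\in\{1,\ldots,s\},\,a\in\{1,\ldots,8\}\}\cup\{f\}$, where $\phi_{ca}$ encodes both a clause $c$ and a full truth assignment $a$ to the three variables of $\phi_c$. Inter-follower payoffs are $-1$ off-diagonal, so the only pure NEs that matter are the diagonal outcomes $(\phi_{ca},\phi_{ca},\phi_{ca})$; a short case analysis (the paper's Lemma~\ref{lem:not_poly_pes}) disposes of every profile involving $f$. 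On a diagonal profile, follower $p$'s payoff against the leader depends only on the $p$-th literal $l_p$ of $\phi_{ca}$, and the NE condition reduces exactly to the threshold constraint on $s_n(a_{v(l_p)})$. Thus for any leader strategy $s_n$, the NEs are precisely those $(\phi_{ca},\phi_{ca},\phi_{ca})$ whose local assignment $a$ is consistent with $T^{s_n}$ on the variables of $\phi_c$; the leader's payoff is $1$ if $a$ satisfies $\phi_c$ and $\epsilon$ otherwise. This immediately gives the YES/NO dichotomy and, crucially, sidesteps the very difficulty you flag as ``the hard part'': coordination is enforced by the diagonal structure, and the three literal positions are checked in parallel by the three symmetric followers.

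Your asymmetric design, by contrast, has a genuine gap you have not closed. With follower~2 choosing a single literal position, it is unclear how the gadget certifies that \emph{all three} literals of the chosen clause are false (which is what ``clause unsatisfied'' means), or---dually---how it rules out an adversarial NE when the clause is satisfied but follower~2 selects one of its false literals. You would need follower~2's best response to always land on a true literal whenever one exists, yet simultaneously need the pessimistic tie-break to pick a falsifying profile; engineering both with only polymatrix edges through a single coordinator is exactly the construction you have left unspecified. The paper's symmetric (clause, local-assignment) encoding avoids this entirely, since the local assignment $a$ is fixed by the shared action and each follower checks one literal of it.
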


\begin{proof}
	We provide a reduction from 3-SAT.
	
	{\em Mapping.} Given a 3-SAT instance, i.e., $V = \{v_1, \ldots, v_r\}$ and $C = \{\phi_1, \ldots, \phi_s\}$, we build a PG with $n = 4$ players, as follows. The leader (player $4$) has an action for each variable in $V$, plus an additional one, i.e., $A_4 = \{ a_{v_1}, \ldots, a_{v_r}, a_w \}$ (where $w \notin V$). On the other hand, each follower has $8$ actions per clause (each corresponding to a truth assignment to the variables in the clause), plus an additional one, namely $A = A_1 = A_2 = A_3 = \{ \phi_{ca} = l_1 l_2 l_3 \mid c \in \{ 1, \ldots, s \}, a \in \{ 1, \ldots, 8 \} \} \cup \{f\}$, where $\phi_{ca} = l_1 l_2 l_3$ identifies a truth assignment to the variables in $\phi_c$ such that $v(l_i)$ is set to true if and only if $l_i$ is a positive literal. For each follower $p \in F$, her payoffs are defined as follows:
	\begin{itemize}
		\item $U_{p,n}(\phi_{ca},a_{v_i}) = 1$ for all $v_i \in V$ and $\phi_{ca} \in A \setminus \{f\}$, with $v(l_p)=v_i$ and $l_p$ positive or $v(l_p) \neq v_i$ and $l_p$ negative;
		\item $U_{p,n}(\phi_{ca},a_{v_i}) = 0$ for all $v_i \in V$ and $\phi_{ca} \in A \setminus \{f\}$, with $v(l_p)=v_i$ and $l_p$ negative or $v(l_p) \neq v_i$ and $l_p$ positive;
		\item $U_{p,n}(\phi_{ca},a_w) = 0$ for all $\phi_{ca} \in A \setminus \{f\}$, if $l_p$ is positive, while $U_{p,n}(\phi_{ca},a_w) = 1$ otherwise;
		\item $U_{p,n}(f,a_n) = 0$ for all $a_n \in A_n$;
		\item $U_{p,q}(a_p,a_p) = 0$ for all $a_p \in A \setminus \{f\}$ and $q \in F \setminus \{p\}$;
		\item $U_{p,q}(a_p,a_q) = -1$ for all $a_p \in A \setminus \{f\}$, $a_q \neq a_p \in A \setminus \{f\}$, and $q \in F \setminus \{p\}$;
		\item $U_{p,q}(f,f) = 0$ and $U_{q,p}(f,f) = 1$ for all $p < q \in F$;
		\item $U_{p,q}(f,\phi_{ca}) = \frac{1}{2(r+1)}$ for all $\phi_{ca} \in A \setminus \{f\}$, with $l_p$ being a positive literal, while $U_{p,q}(f,\phi_{ca}) = \frac{r}{2(r+1)}$ if $l_p$ is negative, for $p < q \in F$;
		\item $U_{q,p}(\phi_{ca},f) = \frac{1}{2(r+1)}$ for all $\phi_{ca} \in A \setminus \{f\}$, with $l_q$ being a positive literal, while $U_{q,p}(\phi_{ca},f) = \frac{r}{2(r+1)}$ if $l_q$ is negative, for $p < q \in F$;
		\item $U_{q,p}(f,\phi_{ca})=0$ for all $\phi_{ca} \in A \setminus \{ f \}$ and $p < q \in F$;
		\item $U_{p,q}(\phi_{ca},f)=1$ for all $\phi_{ca} \in A \setminus \{ f \}$ and $p < q \in F$.
	\end{itemize}
	The payoffs for the leader are so defined:
	\begin{itemize}
		\item $U_{n,p}(\phi_{ca},a_n) = \frac{1}{3}$ for all $a_n \in A_n$ and $\phi_{ca} \in A \setminus \{f\}$ if the truth assignment identified by $\phi_{ca}$ makes $\phi_c$ true, while $U_{n,p}(\phi_{ca},a_n) = \frac{\epsilon}{3}$ otherwise, where $\epsilon > 0$;
		\item $U_{n,p}(f,a_n) = 1$ for all $a_n \in A_n$.
	\end{itemize}
	
	Initially, we prove the following lemma.
	\begin{lemma}\label{lem:not_poly_pes}
		For every $\phi_{ca} \in A \setminus \{f\}$, the outcome $(\phi_{ca},\phi_{ca},\phi_{ca})$ is an NE of the followers' game whenever the leader commits to a strategy $s_n \in \Delta_n$ satisfying the following constraints:
		\begin{itemize}
			\item $s_n(a_{v_i}) \geq \frac{1}{r+1}$ if $v(l_p) = v_i$ and $l_p$ is a positive literal, for some $p \in F$;
			\item $s_n(a_{v_i}) \leq \frac{1}{r+1}$ if $v(l_p) = v_i$ and $l_p$ is a negative literal, for some $p \in F$.
		\end{itemize}
		Moreover, all the outcomes of the followers' game that are not in $\{(\phi_{ca},\phi_{ca},\phi_{ca}) \mid \phi_{ca} \in A \setminus \{f\} \}$ cannot be played in a P-LFE, for any of the leader's commitments.
	\end{lemma}
	
	\begin{proof}
		Initially, we prove the first part of the statement.
		Let $s_n \in \Delta_n$ be an arbitrary leader's strategy. Then, for every $\phi_{ca} \in A \setminus \{f\}$, the outcome $(\phi_{ca},\phi_{ca},\phi_{ca})$ provides follower $p$ with the following utilities $U_p$:
		\begin{itemize}
			\item $U_p = s_n(a_{v_i})$ if $v(l_p) = v_i$ and $l_p$ is positive;
			\item $U_p = 1 - s_n(a_{v_i})$ if $v(l_p) = v_i$ and $l_p$ is negative.
		\end{itemize}
		Thus, by definition, $(\phi_{ca},\phi_{ca},\phi_{ca})$ is an NE if the following conditions hold:
		\begin{itemize}
			\item $U_p \geq \frac{1}{r+1}$ for each $p \in F$ such that $l_p$ is positive, as otherwise $p$ would deviate and play $f$;
			\item $U_p \geq \frac{r}{r+1}$ for each $p \in F$ such that $l_p$ is negative, as otherwise $p$ would deviate and play $f$.
		\end{itemize}
		This proves the first part of the statement.
		
		Notice that, for every leader's commitment $s_n \in \Delta_n$, there always exists at least one outcome $(\phi_{ca},\phi_{ca},\phi_{ca})$ which is an NE in the followers' game.
		
		Moreover, notice that all outcomes $(a_1,a_2,a_3)$ such that $ a_1,a_2,a_3  \in A \setminus \{f\}$ and $a_p \neq a_q$ for some $p,q \in F$ cannot be NEs since the followers get a negative payoff, while they can obtain a positive utility by deviating to $f$.
		Furthermore, the following outcomes cannot be NEs:
		\begin{itemize}
			\item $(f,f,f)$, as the first follower would deviate by playing any other action, thus increasing her utility from zero to something greater than or equal to 1;
			\item $(f,f,a)$, for any $a \in A \setminus \{f\}$, as the third follower would deviate playing action $f$, which guarantees her a utility of $2$ instead of something less than or equal to $1$;
			\item $(f,a,f)$, for any $a \in A \setminus \{f\}$, as the first follower would deviate to $a$, thus increasing her utility above $1$;
			\item $(a,f,f)$, for any $a \in A \setminus \{f\}$, as the second follower would deviate to $a$, thus increasing her utility above $1$;
			\item $(f,a,a)$, for any $a \in A \setminus \{f\}$, as the second follower would deviate to $f$, thus increasing her utility above $1$;
			\item $(a,f,a)$, for any $a \in A \setminus \{f\}$, as the third follower would deviate to $f$, thus increasing her utility above $1$.
		\end{itemize}
		Finally, all outcomes $(a,a,f)$, for all $a \in A \setminus \{f\}$, are never played by the followers in a P-LFE since, even if they could become NEs for some leader's commitment, they always provide the leader with a utility greater than 1, while, as previously shown, there is always at least another NE which gives her a utility at most equal to 1.
		\hfill $\Box$
	\end{proof}
	
	{\em YES-instance.} Suppose that the given 3-SAT instance has a YES answer, i.e., there exists a truth assignment which satisfies all the clauses. Then, by Lemma~\ref{lem:not_poly_pes}, there exists a strategy $s_n \in \Delta_n$ such that the worst (for the leader) NE in the followers' game provides her with a utility of $1$. Thus, the leader's utility in a P-LFE is $1$.
	
	{\em NO-instance.} Let us consider the case in which the instance has a NO answer. By Lemma~\ref{lem:not_poly_pes}, for every leader commitment $s_n \in \Delta_n$, there exists an NE in the followers' game that gives the leader a utility of $\epsilon$. Thus, the leader's utility in a P-LFE is $\epsilon$.
	
	{\em Contradiction.} Now, suppose there exists a polynomial-time approximation algorithm $\mathcal{A}$ with approximation factor $r = \frac{1}{f(n)}$, where $f(n)$ is any polynomial function of $n$. Moreover, let us fix $\epsilon = \frac{1}{2^n}$. If the 3-SAT instance has answer YES, then $\mathcal{A}$, when applied to the corresponding PG, must return a solution with value greater than or equal to $\frac{1}{f(n)} > \epsilon$. Instead, if the answer is NO, $\mathcal{A}$ must return a solution of value $\frac{\epsilon}{f(n)} < \epsilon$. Thus, the existence of $\mathcal{A}$ would imply that 3-SAT is solvable in polynomial time, which is an absurd, unless \textsf{P} = \textsf{NP}. 
	\hfill $\Box$
\end{proof}

\section{Conclusions and Future Works}

In this paper, we study the computational complexity of computing an O/P-LFE in two classes of polymatrix games that are of practical interest for security scenarios.
We show that the problem is Poly-$\mathsf{APX}$-complete and provide an exact algorithm to find a P-LFE for those game classes. 
These results can be extended to 2-player Bayesian games with uncertainty over the follower.
Finally, we show that in general polymatrix games computing an equilibrium is harder, even when players are forced to play pure strategies. 
In fact, in the optimistic case the problem is not in Poly-$\mathsf{APX}$ when the number of followers is non-fixed, while, in the pessimistic case, the same result also holds with only three followers.

Future works may develop along two directions. First, we could enhance our enumeration algorithm with a branch-and-bound scheme, following the approach of~\cite{jain2011quality}, which can only compute an O-LFE in Bayesian games.
Then, we could extend our results to other classes of succinct games, e.g., congestion games.

\bibliographystyle{aaai}
\bibliography{aaai_2018_pes}

\end{document}